\newcommand{\ind}{\mathds{1}}
\theoremstyle{plain}
\newtheorem{thm}{\protect Theorem}
\theoremstyle{plain}
\newtheorem{lem}[thm]{\protect Lemma}
\theoremstyle{plain}
\newcommand{\tripleBar}{\|} 
\title{Missing Not at Random in Matrix Completion: \\ The Effectiveness of Estimating Missingness Probabilities Under a Low Nuclear Norm Assumption}
\author{%
  Wei Ma$^*$\qquad George H.~Chen\thanks{Equal contribution \newline\indent~$\hspace{0.29em}$ Code available at \url{https://github.com/georgehc/mnar_mc}} \\
  Carnegie Mellon University \\
  Pittsburgh, PA 15213 \\
  \texttt{\{weima,georgechen\}@cmu.edu} \\
}
\begin{document}

\maketitle

\begin{abstract}
Matrix completion is often applied to data with entries missing not at random (MNAR). For example, consider a recommendation system where users tend to only reveal ratings for items they like. In this case, a matrix completion method that relies on entries being revealed at uniformly sampled row and column indices can yield overly optimistic predictions of unseen user ratings. Recently, various papers have shown that we can reduce this bias in MNAR matrix completion if we know the probabilities of different matrix entries being missing. These probabilities are typically modeled using logistic regression or naive Bayes, which make strong assumptions and lack guarantees on the accuracy of the estimated probabilities. In this paper, we suggest a simple approach to estimating these probabilities that avoids these shortcomings. Our approach follows from the observation that missingness patterns in real data often exhibit low nuclear norm structure. We can then estimate the missingness probabilities by feeding the (always fully-observed) binary matrix specifying which entries are revealed or missing to an existing nuclear-norm-constrained matrix completion algorithm by Davenport et al.~[2014]. Thus, we tackle MNAR matrix completion by solving a different matrix completion problem first that recovers missingness probabilities. We establish finite-sample error bounds for how accurate these probability estimates are and how well these estimates debias standard matrix completion losses for the original matrix to be completed. Our experiments show that the proposed debiasing strategy can improve a variety of existing matrix completion algorithms, and achieves downstream matrix completion accuracy at least as good as logistic regression and naive Bayes debiasing baselines that require additional auxiliary information.
\end{abstract}

\section{Introduction}

Many modern applications involve partially observed matrices where entries are missing not at random (MNAR). For example, in restaurant recommendation, consider a ratings matrix~$X\in(\mathbb{R}\cup\{\star\})^{m\times n}$ where rows index users and columns index restaurants, and the entries of the matrix correspond to user-supplied restaurant ratings or ``$\star$'' to indicate ``missing''. A user who is never in London is unlikely to go to and subsequently rate London restaurants, and a user who is vegan is unlikely to go to and rate restaurants that focus exclusively on meat. In particular, the entries in the ratings matrix are not revealed uniformly at random. As another example, in a health care context, the partially observed matrix $X$ could instead have rows index patients and columns index medically relevant measurements such as latest readings from lab tests. Which measurements are taken is not uniform at random and involve, for instance, what diseases the patients have. 
Matrix completion can be used in both examples: predicting missing entries in the recommendation context, or imputing missing features before possibly using the imputed feature vectors in a downstream prediction task. 

The vast majority of existing theory on matrix completion assume that entries are revealed with the same probability independently (e.g., \citet{candes2009exact,cai2010singular,keshavan2010matrix1,keshavan2010matrix2,recht2011simpler,chatterjee2015matrix,song2016blind}). Recent approaches to handling entries being revealed with nonuniform probabilities have shown that estimating what these entry revelation probabilities can substantially improve matrix completion accuracy in recommendation data \citep{liang2016modeling,schnabel2016recommendations,wang2018modeling,wang2018collaborative,wang2019doubly}. Specifically, these methods all involve estimating the matrix $P\in[0,1]^{m\times n}$, where $P_{u,i}$ is the probability of entry $(u,i)$ being revealed for the partially observed matrix $X$. We refer to this matrix $P$ as the \emph{propensity score matrix}. By knowing (or having a good estimate of) $P$, we can debias a variety of existing matrix completion methods that do not account for MNAR entries \citep{schnabel2016recommendations}.


In this paper, we focus on the problem of estimating propensity score matrix $P$ and examine how error in estimating $P$ impacts downstream matrix completion accuracy. Existing work \citep{liang2016modeling,schnabel2016recommendations,wang2018collaborative,wang2019doubly} typically models entries of $P$ as outputs of a simple predictor such as logistic regression or naive Bayes. In the generative modeling work by \citet{liang2016modeling} and \citet{wang2018collaborative}, $P$ is estimated as part of a larger Bayesian model, whereas in the work by \citet{schnabel2016recommendations} and \citet{wang2019doubly} that debias matrix completion via inverse probability weighting (e.g., \citet{imbens2015causal}), $P$ is estimated as a pre-processing~step.

Rather than specifying parametric models for $P$, we instead hypothesize that in real data, $P$ often has a particular low nuclear norm structure (precise details are given in Assumptions A1 and A2 in Section \ref{sec:theory}; special cases include $P$ being low rank or having clustering structure in rows/columns). Thus, with enough rows and columns in the partially observed matrix $X$, we should be able to recover~$P$ from the missingness mask matrix $M\in\{0,1\}^{m\times n}$, where $M_{u,i}=\ind\{X_{u,i} \ne \star\}$. For example, for two real datasets \texttt{Coat} \citep{schnabel2016recommendations} and \texttt{MovieLens-100k} \citep{harper2016movielens}, their missingness mask matrices $M$ (note that these are always fully-observed) have block structure, as shown in Figure~\ref{fig:block-structure}, suggesting that they are well-modeled as being generated from a low rank $P$; with values of $P$ bounded away from 0, such a low rank $P$ is a special case of the general low nuclear norm structure we consider. In fact, the low rank missingness patterns of \texttt{Coat} and \texttt{MovieLens-100k} can be explained by topic modeling, as we illustrate in Appendix~\ref{sec:topic-modeling}.

\begin{figure}[t]
  \centering
  \begin{subfigure}[b]{0.45\linewidth}
    \centering\includegraphics[width=\linewidth]{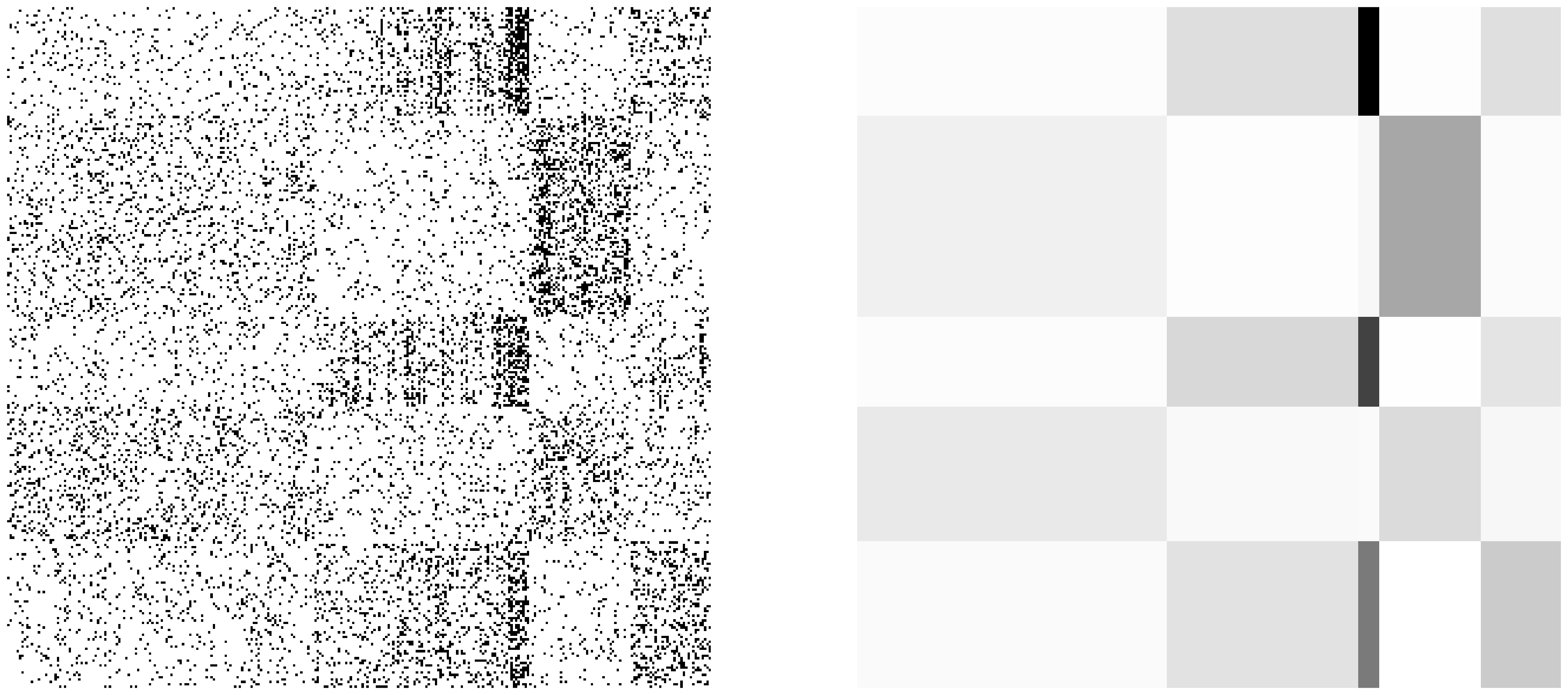}
    \caption{\texttt{Coat}}
  \end{subfigure}%
  $\qquad$
  \begin{subfigure}[b]{0.45\linewidth}
    \centering\includegraphics[width=\linewidth]{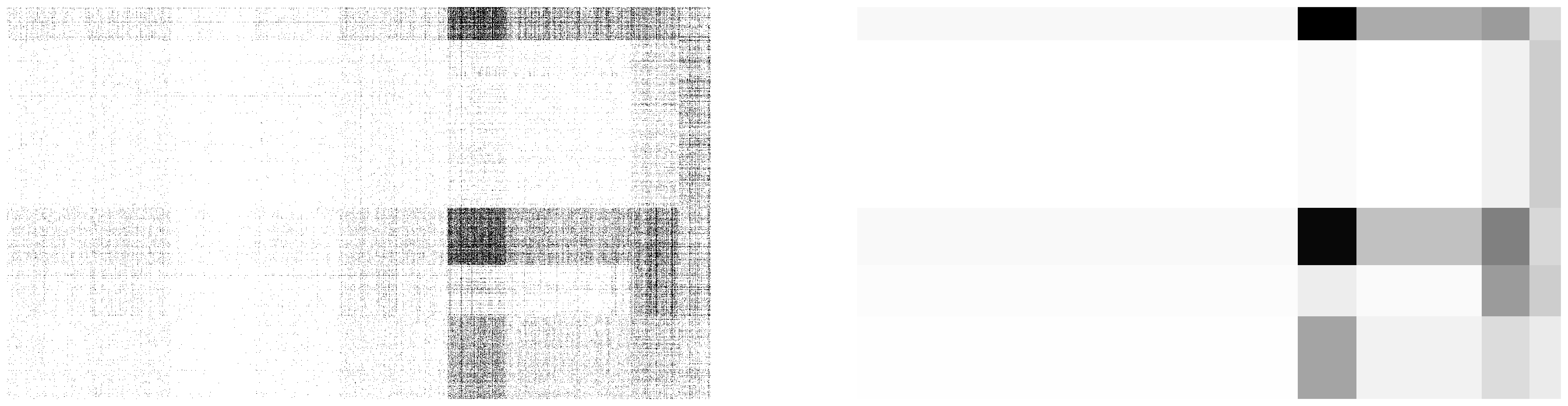}
    \caption{\texttt{MovieLens-100k}}
  \end{subfigure}
  \caption{Missingness mask matrices (with rows indexing users and columns indexing items) exhibit low-rank block structure for the (a) \texttt{Coat} and (b) \texttt{MovieLens-100k} datasets. Black indicates an entry being revealed.
  For each dataset, we show the missingness mask matrix on the left and the corresponding block structure identified using spectral biclustering \citep{kluger2003spectral} on the right; rows and columns have been rearranged based on the biclustering result.}
  \label{fig:block-structure}
  \vspace{-1em}
\end{figure}

We can recover propensity score matrix $P$ from missingness matrix $M$ using the existing 1-bit matrix completion algorithm by \citet{davenport20141}. This algorithm, which we refer to as \textsc{1bitMC}, solves a convex program that amounts to nuclear-norm-constrained maximum likelihood estimation. We remark that Davenport et al.~developed their algorithm for matrix completion where entries are missing independently with the same probability and the revealed ratings are binary. We intentionally apply their algorithm instead to the matrix $M$ of binary values for which there are no missing entries. Thus, rather than completing a matrix, we use \textsc{1bitMC} to denoise $M$ to produce propensity score matrix estimate $\widehat{P}$. Then we use $\widehat{P}$ to help debias the actual matrix completion problem that we care about: completing the original partially observed matrix $X$.

Our contributions are as follows:
\begin{itemize}[leftmargin=1.5em,topsep=0pt,partopsep=1ex,parsep=1ex]
\item We establish finite-sample bounds on the mean-squared error (MSE) for estimating propensity score matrix $P$ using \textsc{1bitMC} and also on its debiasing effect for standard MSE or mean absolute error (MAE) matrix completion losses (the debiasing is via weighting entries inversely by their estimated propensity scores).
\item We empirically examine the effectiveness of using \textsc{1bitMC} to estimate propensity score matrix~$P$ compared to logistic regression or naive Bayes baselines. In particular, we use the estimated propensity scores from these three methods to debias a variety of matrix completion algorithms, where we find that \textsc{1bitMC} typically yields downstream matrix completion accuracy as good as or better than the other two methods. The \textsc{1bitMC}-debiased variants of matrix completion algorithms often do better than their original unmodified counterparts and can outperform some existing matrix completion algorithms that handle nonuniformly sampled data.
\end{itemize}

\section{Model and Algorithms}

\begingroup
\setlength\abovedisplayskip{2pt}
\setlength\belowdisplayskip{2pt}
\setlength\abovedisplayshortskip{1pt}
\setlength\belowdisplayshortskip{1pt}

\textbf{Model.}
Consider a signal matrix $S\in\mathbb{R}^{m\times n}$, a noise matrix $W\in\mathbb{R}^{m\times n}$, and a propensity score matrix $P\in[0,1]^{m\times n}$. All three of these matrices are unknown. We observe the matrix $X\in(\mathbb{R}\cup\{\star\})^{m\times n}$, where $X_{u,i}=S_{u,i}+W_{u,i}$ with probability $P_{u,i}$, independent of everything else; otherwise $X_{u,i}=\star$, indicating that the entry is missing. We denote $\Omega$ to be the set of entries that are revealed (i.e., $\Omega=\{(u,i): u\in[m],i\in[n]\text{ s.t.~}X_{u,i}\ne\star\}$), and we denote $X^*:=S+W$ to be the noise-corrupted data if we had observed all the entries. Matrix completion aims to estimate~$S$ given $X$, exploiting some structural assumption on $S$ (e.g., low nuclear norm, low rank, a latent variable model).

\textbf{Debiasing matrix completion with inverse propensity scoring.} Suppose we want to estimate $S$ with low mean squared error (MSE).
If no entries are missing so that we directly observe $X^*$, then the MSE of an estimate $\widehat{S}$ of $S$ is
\[
L_{\text{full MSE}}(\widehat{S}):=\frac{1}{mn}\sum_{u=1}^m\sum_{i=1}^n (\widehat{S}_{u,i}-X^*_{u,i})^{2}.
\]
However, we actually observe $X$ which in general has missing entries. The standard approach is to instead use the observed MSE:
\[
L_{\text{MSE}}(\widehat{S}):=\frac{1}{|\Omega|}\sum_{(u,i)\in\Omega}(\widehat{S}_{u,i}-X_{u,i})^{2}.
\]
If the probability of every entry in $X$ being revealed is the same (i.e., the matrix $P$ consists of only one unique nonzero value), then the loss $L_{\text{MSE}}(\widehat{S})$ is an unbiased estimate of the loss $L_{\text{full MSE}}(\widehat{S})$. However, this is no longer guaranteed to hold when entries are missing with different probabilities. To handle this more general setting, we can debias the loss $L_{\text{MSE}}$ by weighting each observation inversely by its propensity score, a technique referred to as inverse propensity scoring (IPS) or inverse probability weighting in causal inference \citep{thompson2012sampling,imbens2015causal,little2019statistical,schnabel2016recommendations}:
\begin{equation}
\label{eq:IPSweight}
L_{\text{IPS-MSE}}(\widehat{S}|P):=\frac{1}{mn}\sum_{(u,i)\in\Omega}\frac{(\widehat{S}_{u,i}-X_{u,i})^{2}}{P_{u,i}}.
\end{equation}
Assuming $P$ is known, the IPS loss $L_{\text{IPS-MSE}}(\widehat{S}|P)$ is an unbiased estimate for $L_{\text{full MSE}}(\widehat{S})$.%
\footnote{Note that $L_{\text{IPS-MSE}}(\widehat{S}|P)=\frac{1}{mn}\sum_{u=1}^m\sum_{i=1}^n \ind\{(u,i)\in\Omega\} \frac{(\widehat{S}_{u,i}-X_{u,i}^*)^{2}}{P_{u,i}}$. Taking the expectation with respect to which entries are revealed, $\mathbb{E}_{\Omega}[L_{\text{IPS-MSE}}(\widehat{S}|P)]=\frac{1}{mn}\sum_{u=1}^m\sum_{i=1}^n P_{u,i} \frac{(\widehat{S}_{u,i}-X_{u,i}^*)^{2}}{P_{u,i}}=L_{\text{full MSE}}(\widehat{S})$.}

Any matrix completion method that uses the naive MSE loss $L_{\text{MSE}}$ can then be modified to instead use the unbiased loss~$L_{\text{IPS-MSE}}$. For example, the standard approach of minimizing $L_{\text{MSE}}$ with nuclear norm regularization can be modified where we instead solve the following convex program:
\begin{equation}
\widehat{S} =
\underset{\Gamma\in\mathbb{R}^{m\times n}}{\text{arg min}}~ L_{\text{IPS-MSE}}(\Gamma|P) + \lambda \|\Gamma\|_*,
\label{eq:IPS-MSE-nuc-norm-opt}
\end{equation}
where $\lambda>0$ is a user-specified parameter, and $\|\cdot\|_*$ denotes the nuclear norm. Importantly, using the loss $L_{\text{IPS-MSE}}$ requires either knowing or having an estimate for the propensity score matrix $P$.

Instead of squared error, we could look at other kinds of error such as absolute error, in which case we would consider MAE instead of MSE. Also, instead of nuclear norm, other regularizers could be used in optimization problem~\eqref{eq:IPS-MSE-nuc-norm-opt}. Lastly, we remark that the inverse-propensity-scoring loss $L_{\text{IPS-MSE}}$ is not the only way to use propensity scores to weight. Another example is the Self-Normalized Inverse Propensity Scoring (SNIPS) estimator \citep{trotter1956conditional,swaminathan2015self}, which replaces the denominator term $mn$ in equation~\eqref{eq:IPSweight} by $\sum_{(u,i)\in\Omega} \frac1{P_{u,i}}$. This estimator tends to have lower variance than the IPS estimator but incurs a small bias \citep{hesterberg1995weighted}.

For ease of analysis, our theory focuses on debiasing with IPS. Algorithmically, for a given $P$, whether one uses IPS or SNIPS for estimating $S$ in optimization problem~\eqref{eq:IPS-MSE-nuc-norm-opt} does not matter since they differ by a multiplicative constant; tuning regularization parameter $\lambda$ would account for such constants. In experiments, for reporting test set errors, we use SNIPS since IPS can be quite sensitive to how many revealed entries are taken into consideration.






\textbf{Estimating the propensity score matrix.}
We can estimate $P$ based on the missingness mask matrix $M\in\{0,1\}^{m\times n}$, where $M_{u,i}=\ind\{X_{u,i}\ne\star\}$. Specifically, we use the nuclear-norm-constrained maximum likelihood estimator proposed by \citet{davenport20141} for 1-bit matrix completion, which we refer to as \textsc{1bitMC}. The basic idea of \textsc{1bitMC} is to model $P$ as the result of applying a user-specified link function $\sigma:\mathbb{R}\rightarrow[0,1]$ to each entry of a parameter matrix $A\in\mathbb{R}^{m\times n}$ so that $P_{u,i}=\sigma(A_{u,i})$; $\sigma$ can for instance be taken to be the standard logistic function $\sigma(x)=1/(1+e^{-x})$.
Then we estimate $A$ assuming that it satisfies nuclear norm and entry-wise max norm constraints, namely that
\[
A \in
\mathcal{F}_{\tau,\gamma}
\!:=\! \big\{ \Gamma\in\mathbb{R}^{m\times n}:\| \Gamma\|_{*}\le\tau\sqrt{mn},~ \| \Gamma\|_{\max} \le \gamma \big\},
\]
where $\tau>0$ and $\gamma>0$ are user-specified parameters. Then \textsc{1bitMC} is given as follows:

\begin{enumerate}[leftmargin=2em,topsep=0pt,partopsep=1ex,parsep=1ex]
\item Solve the constrained Bernoulli maximum likelihood problem:
\begin{equation}
\widehat{A}=\underset{\Gamma\in\mathcal{F}_{\tau,\gamma}}{\text{arg max}}\sum_{u=1}^{m}\sum_{i=1}^{n}[M_{u,i}\log \sigma(\Gamma_{u,i})+(1-M_{u,i})\log(1-\sigma(\Gamma_{u,i}))]. \label{eq:stage1-opt}
\end{equation}
For specific choices of $\sigma$ such as the standard logistic function, this optimization problem is convex and can, for instance, be solved via projected gradient descent.
\item Construct the matrix $\widehat{P}\in[0,1]^{m\times n}$, where $\widehat{P}_{u,i}:=\sigma(\widehat{A}_{u,i})$.
\end{enumerate}

\vspace{-1em}
\section{Theoretical Guarantee}
\label{sec:theory}
\vspace{-1em}

For $\widehat{P}$ computed via \textsc{1bitMC}, our theory bounds how close $\widehat{P}$ is to $P$ and also
how close the IPS loss $L_{\text{IPS-MSE}}(\widehat{S}|\widehat{P})$ is to the fully-observed MSE $L_{\text{full MSE}}(\widehat{S})$. We first state our assumptions on the propensity score matrix $P$ and the partially observed matrix $X$.
As introduced previously, \textsc{1bitMC} models $P$ via parameter matrix $A\in\mathbb{R}^{m\times n}$ and link function $\sigma:\mathbb{R}\rightarrow[0,1]$ such that $P_{u,i}=\sigma(A_{u,i})$. For ease of exposition, throughout this section, we take $\sigma$ to be the standard logistic function: $\sigma(x)={1/(1+e^{-x})}$.
Following \citet{davenport20141}, we assume that:
\begin{itemize}[leftmargin=2.5em,topsep=0pt,partopsep=1ex,parsep=0ex]
\item[\textbf{A1.}]$A$ has bounded nuclear norm: there exists a constant
$\theta\in(0,\infty)$ such that $\|A\|_{*}\le\theta\sqrt{mn}$.
\item[\textbf{A2.}]Entries of $A$ are bounded in absolute value: there exists a constant $\alpha\in(0,\infty)$ such that $\|A\|_{\max}:=\max_{u\in[m],i\in[n]}|A_{u,i}|\le\alpha$. In other words, $P_{u,i}\in[\sigma(-\alpha),\sigma(\alpha)]$ for all $u\in[m]$ and $i\in[n]$, where $\sigma$ is the standard logistic function.
\end{itemize}
As stated, Assumption A2 requires probabilities in $P$ to be bounded away from both 0 and 1. With small changes to \textsc{1bitMC} and our theoretical analysis, it is possible to allow for entries in $P$ to be~1, i.e., propensity scores should be bounded from~0 but not necessarily from~1. We defer discussing this setting to Appendix~\ref{sec:1bitMC-modified} as the changes are somewhat technical; the resulting theoretical guarantee is qualitatively similar to our guarantee for \textsc{1bitMC} below.

Assumptions A1 and A2 together are more general than assuming that $A$ has low rank and has entries bounded in absolute value. In particular, when Assumption A2 holds and $A$ has rank $r\in(0,\min\{m,n\}]$, then Assumption A1 holds with $\theta=\alpha\sqrt{r}$ (since $\|A\|_* \le \sqrt{r} \|A\|_F \le \sqrt{rmn} \|A\|_{\max} \le \alpha\sqrt{rmn}$, where $\|\cdot\|_F$ denotes the Frobenius norm). Note that a special case of $A$ being low rank is $A$ having clustering structure in rows, columns, or both. Thus, our theory also covers the case in which $P$ has row/column clustering with entries bounded away from 0. 

As for the partially observed matrix $X$, we assume that its values are bounded, regardless of which entries are revealed (so our assumption will be on $X^*$, the version of $X$ that is fully-observed):
\begin{itemize}[leftmargin=2.5em,topsep=0pt,partopsep=1ex,parsep=1ex]
\item[\textbf{A3.}]There exists a constant $\phi\in(0,\infty)$ such that $\|X^*\|_{\max}\le\phi$.
\end{itemize}
For example, in a recommendation systems context where $X$ is the ratings matrix, Assumption A3 holds if the ratings fall within a closed range of values (such as like/dislike where $X_{u,i}\in\{+1,-1\}$ and $\phi=1$, or a rating out of five stars where $X_{u,i}\in[1,5]$ and $\phi=5$).

For simplicity, we do not place assumptions on signal matrix $S$ or noise matrix $W$ aside from their sum $X^*$ having bounded entries. Different assumptions on $S$ and $W$ lead to different matrix completion algorithms. Many of these algorithms can be debiased using estimated propensity scores. We focus our theoretical analysis on this debiasing step and experimentally apply the debiasing to a variety of matrix completion algorithms. We remark that there are existing papers that discuss how to handle MNAR data when $S$ is low rank and $W$ consists of i.i.d.~zero-mean Gaussian (or sub-Gaussian) noise, a setup related to principal component analysis (e.g., \citet{sportisse2018imputation,sportisse2019estimation,zhu2019high}; a comparative study is provided by \citet{dray2015principal}).

\endgroup
\begingroup
\begingroup
\setlength\abovedisplayskip{2pt}
\setlength\belowdisplayskip{2pt}
\setlength\abovedisplayshortskip{1pt}
\setlength\belowdisplayshortskip{1pt}


Our main result is as follows. We defer the proof to Appendix~\ref{sec:pf-P-est-main-theorem}.
\begin{thm}\label{thm:P-est-main-theorem}
Under Assumptions A1--A3, suppose that we run algorithm \textsc{1bitMC} with user-specified parameters satisfying $\tau\ge\theta$ and $\gamma\ge\alpha$ to obtain the estimate $\widehat{P}$ of propensity score matrix $P$. Let $\widehat{S}\in\mathbb{R}^{m\times n}$ be any matrix satisfying $\|\widehat{S}\|_{\max}\le\psi$ for some $\psi\ge\phi$. Let $\delta\in(0,1)$. Then there exists a universal constant $C>0$ such that provided that $m+n\ge C$, with probability at least $1-\frac{C}{m+n}-\delta$ over randomness in which entries are revealed in $X$, we simultaneously have
\begin{align}
\frac{1}{mn}\sum_{u=1}^{m}\sum_{i=1}^{n}(\widehat{P}_{u,i}-P_{u,i})^{2}
& \le 4e\tau\Big(\frac{1}{\sqrt{m}}+\frac{1}{\sqrt{n}}\Big),
\label{eq:main-inequality1} \\[-1ex]
|L_{\text{IPS-MSE}}(\widehat{S}|\widehat{P})-L_{\text{full MSE}}(\widehat{S})|
& \le \frac{8\psi^2\sqrt{e\tau}}{\sigma(-\gamma)\sigma(-\alpha)}\Big(\frac{1}{m^{1/4}}+\frac{1}{n^{1/4}}\Big)
      + \frac{4\psi^{2}}{\sigma(-\alpha)}\sqrt{\frac{1}{2mn}\log\frac{2}{\delta}}.
\label{eq:main-inequality2}
\end{align}
\end{thm}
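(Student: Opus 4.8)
The plan is to establish the two inequalities almost separately and then combine them with a union bound. Inequality~\eqref{eq:main-inequality1} is a recovery guarantee for \textsc{1bitMC} and follows by instantiating the 1-bit matrix completion analysis of \citet{davenport20141}; inequality~\eqref{eq:main-inequality2} is then obtained by a short decomposition that uses~\eqref{eq:main-inequality1} as a black box. I would let $E_1$ be the event that~\eqref{eq:main-inequality1} holds and $E_2$ the event that the Hoeffding bound below holds, argue $\Pr(E_1)\ge 1-\tfrac{C}{m+n}$ and $\Pr(E_2)\ge 1-\delta$, and conclude that both displayed inequalities hold on $E_1\cap E_2$, which by the union bound has probability at least $1-\tfrac{C}{m+n}-\delta$.

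For inequality~\eqref{eq:main-inequality1}, the first observation is that the true parameter matrix $A$ is feasible for the constrained MLE~\eqref{eq:stage1-opt}: Assumptions A1 and A2 give $\|A\|_*\le\theta\sqrt{mn}\le\tau\sqrt{mn}$ and $\|A\|_{\max}\le\alpha\le\gamma$, so $A\in\mathcal{F}_{\tau,\gamma}$ and the log-likelihood at $\widehat{A}$ is at least that at $A$. The standard route from this basic inequality is to (i) turn it into a bound on a KL-type or squared-Hellinger discrepancy between the Bernoulli laws with success probabilities $\widehat{P}_{u,i}$ and $P_{u,i}$; (ii) control the resulting empirical process uniformly over $\mathcal{F}_{\tau,\gamma}$ by a symmetrization/Rademacher argument, where the Rademacher complexity of the nuclear-norm ball of radius $\tau\sqrt{mn}$ scales like $\tau(\sqrt{m}+\sqrt{n})$ and produces the $\tfrac{1}{\sqrt m}+\tfrac{1}{\sqrt n}$ rate; and (iii) specialize the link-function constants to the standard logistic $\sigma$, for which the relevant steepness/Lipschitz quantities evaluate to the clean factor $4e\tau$. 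Since the logistic link makes~\eqref{eq:stage1-opt} a concave maximization over a convex set, $\widehat{A}$ is well-defined, and this whole step is essentially the Davenport et al.\ bound translated from the parameter matrix to the probability matrix $\widehat{P}=\sigma(\widehat{A})$; the condition $m+n\ge C$ merely places us in the regime where the bound is nonvacuous.

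For inequality~\eqref{eq:main-inequality2}, I would introduce the oracle loss $L_{\text{IPS-MSE}}(\widehat{S}\mid P)$ that uses the \emph{true} propensity scores and split, by the triangle inequality,
\[
\begin{aligned}
|L_{\text{IPS-MSE}}(\widehat{S}\mid\widehat{P})-L_{\text{full MSE}}(\widehat{S})|
&\le |L_{\text{IPS-MSE}}(\widehat{S}\mid\widehat{P})-L_{\text{IPS-MSE}}(\widehat{S}\mid P)| \\
&\quad + |L_{\text{IPS-MSE}}(\widehat{S}\mid P)-L_{\text{full MSE}}(\widehat{S})|,
\end{aligned}
\]
calling the two terms on the right $T_1$ and $T_2$. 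For $T_2$, with $\widehat{S}$ treated as fixed, the footnote identity gives $\mathbb{E}_\Omega[L_{\text{IPS-MSE}}(\widehat{S}\mid P)]=L_{\text{full MSE}}(\widehat{S})$, and $L_{\text{IPS-MSE}}(\widehat{S}\mid P)$ is an average of $mn$ independent terms $\ind\{(u,i)\in\Omega\}(\widehat{S}_{u,i}-X^*_{u,i})^2/P_{u,i}$, each lying in $[0,4\psi^2/\sigma(-\alpha)]$ because $(\widehat{S}_{u,i}-X^*_{u,i})^2\le(2\psi)^2$ by A3 together with $\psi\ge\phi$, and $P_{u,i}\ge\sigma(-\alpha)$ by A2. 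Hoeffding's inequality then yields exactly the second term of~\eqref{eq:main-inequality2} with probability at least $1-\delta$, defining $E_2$. For $T_1$, I would write the difference as $\tfrac{1}{mn}\sum_{(u,i)\in\Omega}(\widehat{S}_{u,i}-X^*_{u,i})^2\big(\widehat{P}_{u,i}^{-1}-P_{u,i}^{-1}\big)$ and bound $|\widehat{P}_{u,i}^{-1}-P_{u,i}^{-1}|=|P_{u,i}-\widehat{P}_{u,i}|/(\widehat{P}_{u,i}P_{u,i})$ using $P_{u,i}\ge\sigma(-\alpha)$ and $\widehat{P}_{u,i}=\sigma(\widehat{A}_{u,i})\ge\sigma(-\gamma)$ (the latter being exactly where the max-norm constraint $\|\widehat{A}\|_{\max}\le\gamma$ is used). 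Dropping the restriction to $\Omega$ (all summands are nonnegative) bounds $T_1$ by $\tfrac{4\psi^2}{\sigma(-\gamma)\sigma(-\alpha)}\cdot\tfrac{1}{mn}\sum_{u,i}|\widehat{P}_{u,i}-P_{u,i}|$; Cauchy--Schwarz converts this mean absolute error into the root-mean-square error controlled by~\eqref{eq:main-inequality1}, and subadditivity of the square root, $\sqrt{\tfrac{1}{\sqrt m}+\tfrac1{\sqrt n}}\le m^{-1/4}+n^{-1/4}$, produces the first term of~\eqref{eq:main-inequality2}.

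The genuinely hard part is the machinery behind~\eqref{eq:main-inequality1}: the Rademacher/empirical-process control over the nuclear-norm ball and the inversion of the likelihood bound into a squared-probability error with the precise logistic constants. Because the paper explicitly builds on \citet{davenport20141}, I expect this to be careful bookkeeping of their argument rather than new machinery, and everything in~\eqref{eq:main-inequality2} is comparatively routine once~\eqref{eq:main-inequality1} is in hand. The one subtlety worth flagging is that the Hoeffding step for $T_2$ treats $\widehat{S}$ as a fixed matrix (the bound is stated pointwise in $\widehat{S}$), so no uniform control over the class $\{\widehat{S}:\|\widehat{S}\|_{\max}\le\psi\}$ is required.
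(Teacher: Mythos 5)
Your proposal follows essentially the same route as the paper's proof: feasibility of $A$ plus the basic optimality inequality, inverted through KL and Pinsker into a squared-error bound controlled by a symmetrization/contraction empirical-process argument over $\mathcal{F}_{\tau,\gamma}$ for inequality~\eqref{eq:main-inequality1}, then the identical triangle-inequality split with the $\sigma(-\gamma)\sigma(-\alpha)$ lower bounds, the $\ell_1$--$\ell_2$ (Cauchy--Schwarz) step, Hoeffding for the oracle-IPS term, and a union bound for inequality~\eqref{eq:main-inequality2}. The only detail you defer to \citet{davenport20141} that the paper spells out is how the $1-\frac{C}{m+n}$ tail arises, namely a Markov argument on the $h$-th moment with $h=\log(m+n)$, combined with the contraction inequality of \citet{boucheron2013concentration} and Seginer's bound on $\mathbb{E}[\|\Xi\circ\overline{M}\|_2^h]$, which is consistent with your sketch.
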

This theorem implies that under Assumptions A1--A3, with the number of rows and columns going to infinity, the IPS loss $L_{\text{IPS-MSE}}(\widehat{S}|\widehat{P})$ with $\widehat{P}$ computed using the \textsc{1bitMC} algorithm is a consistent estimator for the fully-observed MSE loss $L_{\text{full MSE}}(\widehat{S})$.

\endgroup
\begingroup

We remark that our result easily extends to using MAE instead of MSE. If we define
\[
L_{\text{full MAE}}(\widehat{S}):=\frac{1}{mn}\sum_{u=1}^m\sum_{i=1}^n |\widehat{S}_{u,i}-X^*_{u,i}|,\quad\;
L_{\text{IPS-MAE}}(\widehat{S}|P):=\frac{1}{mn}\sum_{(u,i)\in\Omega}\frac{|\widehat{S}_{u,i}-X_{u,i}|}{P_{u,i}},
\]
then the MAE version of Theorem \ref{thm:P-est-main-theorem} would replace \eqref{eq:main-inequality2} with
\[
|L_{\text{IPS-MAE}}(\widehat{S}|\widehat{P})-L_{\text{full MAE}}(\widehat{S})|
\le \frac{4\psi\sqrt{e\tau}}{\sigma(-\gamma)\sigma(-\alpha)}\Big(\frac{1}{m^{1/4}}+\frac{1}{n^{1/4}}\Big)
      + \frac{2\psi}{\sigma(-\alpha)}\sqrt{\frac{1}{2mn}\log\frac{2}{\delta}}.
\]
Equations~\eqref{eq:IPS-MSE-nuc-norm-opt} and \eqref{eq:stage1-opt} both correspond to convex programs that can be efficiently solved via proximal gradient methods \citep{parikh2014proximal}. Hence we can find a $\widehat{S}$ that minimizes $L_{\text{IPS-MSE}}(\widehat{S}|\widehat{P})$, and it is straightforward to show that when $m,n\to\infty$, this $\widehat{S}$ also minimizes $L_{\text{full MSE}}(\widehat{S})$ since $|L_{\text{IPS-MSE}}(\widehat{S}|\widehat{P})-L_{\text{full MSE}}(\widehat{S})| \to 0$.

\endgroup

\section{Experiments}

We now assess how well \textsc{1bitMC} debiases matrix completion algorithms on synthetic and real data.

\subsection{Synthetic Data}

\textbf{Data.} We create two synthetic datasets that are intentionally catered toward propensity scores being well-explained by naive Bayes and logistic regression. 1) \texttt{MovieLoverData}: the dataset comes from the Movie-Lovers toy example (Figure~1 in \citet{schnabel2016recommendations}, which is based on Table~1 of~\cite{steck2010training}), where we set parameter $p = 0.5$; 2) \texttt{UserItemData}: for the second dataset, the ``true'' rating matrix and propensity score matrix are generated by the following steps. We generate $U_1 \in [0,1]^{m \times 20}, V_1 \in [0,1]^{n \times 20}$ by sampling entries i.i.d.~from $\text{Uniform}[0,1]$, and then form the form $\widetilde{S}=U_1 V_1^{\top}$. We scale the values of $\widetilde{S}$ to be from 1 to 5 and round to the nearest integer to produce the true ratings matrix $S$. Next, we generate row and column feature vectors $U_2 \in \mathbb{R}^{m \times 20}, V_2 \in \mathbb{R}^{n \times 20}$ by sampling entries i.i.d.~from a normal distribution $\mathcal{N}(0,1/64)$. We further generate $w_1 \in [0,1]^{20 \times 1}, w_2 \in [0,1]^{20 \times 1}$ by sampling entries i.i.d.~from $\text{Uniform}[0,1]$. Then we form the propensity score matrix $P\in[0,1]^{m\times n}$ by setting $P_{u,i} = \sigma(U_2[u]w_1 + V_2[i]w_2)$, where~$\sigma$ is the standard logistic function, and $U_2[u]$ denotes the \mbox{$u$-th} row of $U_2$. For both datasets, we set $m=200, n = 300$. We also assume that i.i.d noise $\mathcal{N}(0,1)$ is added to each matrix entry of signal matrix~$S$ in producing the partially revealed matrix~$X$. All the ratings are clipped to $[1, 5]$ and rounded. By sampling based on $P$, we generate $\Omega$, the training set indices. 
The true ratings matrix~$S$ is used for testing.
We briefly explain why Assumptions A1--A3 hold for these two datasets in Appendix \ref{sec:more-experiments}.

\textbf{Algorithms comparison.}
We compare two types of algorithms for matrix completion. The first type does not account for entries being MNAR. This type of algorithm includes Probabilistic Matrix Factorization (\textsc{PMF}) \citep{mnih2008probabilistic}, Funk's \textsc{SVD} \citep{funk2006netflix}, \textsc{SVD++} \citep{koren2008factorization},  and \textsc{SoftImpute} \citep{mazumder2010spectral}. The second type accounts for MNAR entries and includes max-norm-constrained matrix completion (\textsc{MaxNorm}) \citep{cai2016matrix}, \textsc{expoMF} \citep{liang2016modeling}, and weighted-trace-norm-regularized matrix completion (\textsc{WTN}) \citep{srebro2010collaborative}. For all the algorithms above (except for \textsc{expoMF}), the ratings in the squared error loss can be debiased by the propensity scores (as shown in equation~\eqref{eq:IPSweight}), and the propensity scores can be estimated from logistic regression (\textsc{LR}) (which requires extra user or item feature data), naive Bayes (\textsc{NB}) (specifically equation (18) of \citet{schnabel2016recommendations}, which requires a small set of missing at random (MAR) ratings), and \textsc{1bitMC} \citep{davenport20141}. Hence we have a series of weighted-variants of the existing algorithms. For example, \textsc{1bitMC-PMF} means the  \textsc{PMF} method is used and the inverse propensity scores estimated from \textsc{1bitMC} is used as weights for debiasing.


\textbf{Metrics.} We use MSE and MAE to measure the estimation quality of the propensity scores. Similarly, we also use MSE and MAE to compare the estimated full rating matrix with the true rating matrix $S$ (denoted as full-MSE or full-MAE). We also report SNIPS-MSE (SNIPS-MAE); these are evaluated on test set entries (i.e., all matrix entries in these synthetic datasets) using the true $P$.


\textbf{Experiment setup.}
For all algorithms, we tune hyperparameters through 5-fold cross-validation using grid search. For the debiased methods (\textsc{LR-}$\star$, \textsc{NB-}$\star$, \textsc{1bitMC-}$\star$), we first estimate the propensity score matrix and then optimize the debiased loss.
We note that \texttt{MovieLoverData} does not contain user/item features. Thus, naive Bayes can be used to estimate $P$ for \texttt{MovieLoverData} and \texttt{UserItemData}, while logistic regression is only applicable for \texttt{UserItemData}. 
In using logistic regression to estimate propensity scores, we can use all user/item features, only user features, or only item features (denoted as \textsc{LR}, \textsc{LR-U}, \textsc{LR-I}, respectively). Per dataset, we generate $P$ and $S$ once before generating 10 samples of noisy revealed ratings $X$ based on $P$ and $S$. We apply all the algorithms stated above to these 10 experimental repeats. 

\textbf{Results.}
Before looking at the performance of matrix completion methods, we first inspect the accuracy of the estimated propensity scores. Since we know the true propensity score for the synthetic datasets, we can compare the true $P$ with the estimated $\widehat{P}$ directly, as presented in Table~\ref{tab:p-table}. In how we constructed the synthetic datasets, unsurprisingly estimating propensity scores using naive Bayes on \texttt{MovieLoverData} and logistic regression on \texttt{UserItemData} achieve the best performance. In both cases, \textsc{1bitMC} still achieves reasonably low errors in estimating the propensity score matrices.


\begin{table*}[h]
  \vspace{-0.5em}
  \centering
  \begin{tabular}{lllll}
    \toprule
    \multirow{2}{*}{Algorithm} & \multicolumn{2}{c}{\texttt{MovieLoverData}}  & \multicolumn{2}{c}{\texttt{UserItemData}}  \\
    \cmidrule(r){2-5}
    ~ & MSE & MAE  & MSE & MAE \\
    \midrule
    Naive Bayes  & {\bf 0.0346 $\pm$ 0.0002} & {\bf 0.1665 $\pm$ 0.0007 }&  0.0150 $\pm$ 0.0001& 0.0990 $\pm$ 0.0005   \\
   \textsc{lr} & N/A & N/A &{\bf 0.0002 $\pm$ 0.0001 }& {\bf 0.0105 $\pm$ 0.0017}   \\
    \textsc{lr-U}&  N/A & N/A  & 0.0070 $\pm$ 0.0000 &  0.0667 $\pm$ 0.0002   \\
    \textsc{lr-I} & N/A & N/A  & 0.0065 $\pm$ 0.0000 &  0.0639 $\pm$ 0.0001    \\
    \textsc{1bitMC} & {0.0520 $\pm$ 0.0003} & {0.1724 $\pm$ 0.0006} & {0.0119 $\pm$ 0.0000} &  {0.0881 $\pm$ 0.0002} \\
    \bottomrule
  \end{tabular}
  \smallskip
  \caption{Estimation accuracy of propensity score matrix (average $\pm$ standard deviation across 10 experimental repeats).}
  \label{tab:p-table}
\end{table*}


Now we compare the matrix completion methods directly and report the performance of different methods in Table~\ref{tab:mc}. Note that we only show the MSE-based results; the MAE-based results are presented in Appendix~\ref{sec:more-experiments}. 
The debiased variants generally perform as well as or better than their original unmodified counterparts.  \textsc{1bitMC-PMF} achieves the best accuracy on \texttt{MovieLoverData}, and both \textsc{SVD} and \textsc{1bitMC-SVD} perform the best on \texttt{UserItemData}. The debiasing using \textsc{1bitMC} can improve the performance of \textsc{PMF}, \textsc{SVD}, \textsc{MaxNorm} and \textsc{WTN} on \texttt{MovieLoverData}, and \textsc{PMF} is improved on \texttt{UserItemData}. In general, debiasing using \textsc{1bitMC} leads to higher matrix completion accuracy than debiasing using \textsc{LR} and \textsc{NB}.


\begin{table*}[t]
  \centering
  \begin{tabular}{lllll}
    \toprule
    \multirow{2}{*}{Algorithm} & \multicolumn{2}{c}{\texttt{MovieLoverData}}& \multicolumn{2}{c}{\texttt{UserItemData}}  \\
    \cmidrule(r){2-5}
    ~    &  MSE &  SNIPS-MSE & MSE & SNIPS-MSE  \\
    \midrule
\textsc{PMF} & 0.326  $\pm$  0.042 & 0.325  $\pm$  0.041 & 0.161  $\pm$  0.002 & 0.160  $\pm$  0.002 \\
\textsc{NB-PMF} & 0.363  $\pm$  0.013 & 0.363  $\pm$  0.012& 0.144  $\pm$  0.002 & 0.145  $\pm$  0.002 \\
\textsc{LR-PMF} &N/A & N/A& 0.159  $\pm$  0.002 & 0.164  $\pm$  0.003 \\
\textsc{1bitMC-PMF}  & {\bf 0.299  $\pm$  0.014} & {\bf 0.299  $\pm$  0.013} & 0.146  $\pm$  0.002 & 0.146  $\pm$  0.002\\
\arrayrulecolor{black!30}\midrule
\textsc{SVD}  & 1.359  $\pm$  0.033 & 1.360  $\pm$  0.034& {\bf 0.139  $\pm$  0.001} & {\bf 0.139  $\pm$  0.001} \\
\textsc{NB-SVD} & 0.866  $\pm$  0.028 & 0.866  $\pm$  0.027 & 0.147  $\pm$  0.001 & 0.147  $\pm$  0.002 \\
\textsc{LR-SVD} &N/A & N/A& 0.147  $\pm$  0.001 & 0.152  $\pm$  0.002 \\
\textsc{1bitMC-SVD} & 0.861  $\pm$  0.028 & 0.862  $\pm$  0.028& {\bf 0.139  $\pm$  0.001} & {\bf0.139  $\pm$  0.001} \\
\midrule
\textsc{SVD++} & 0.343  $\pm$  0.023 & 0.343  $\pm$  0.021 & 0.140  $\pm$  0.001 & 0.140  $\pm$  0.001 \\
\textsc{NB-SVD++} & 0.968  $\pm$  0.020 & 0.987  $\pm$  0.020 & 0.152  $\pm$  0.002 & 0.153  $\pm$  0.002 \\
\textsc{LR-SVD++} &N/A & N/A& 0.154  $\pm$  0.001 & 0.160  $\pm$  0.002 \\
\textsc{1bitMC-SVD++} & 0.345  $\pm$  0.023 & 0.345  $\pm$  0.021 & 0.140  $\pm$  0.001 & 0.140  $\pm$  0.001 \\
\midrule
\textsc{SoftImpute} & 0.374  $\pm$  0.009 & 0.374  $\pm$  0.008 & 0.579  $\pm$  0.002 & 0.556  $\pm$  0.003 \\
\textsc{NB-SoftImpute} & 0.495  $\pm$  0.010 & 0.495  $\pm$  0.009  & 0.599  $\pm$  0.004 & 0.588  $\pm$  0.004 \\
\textsc{LR-SoftImpute} &N/A & N/A& 0.602  $\pm$  0.003 & 0.581  $\pm$  0.004 \\
\textsc{1bitMC-SoftImpute} & 0.412  $\pm$  0.011 & 0.412  $\pm$  0.010 & 0.588  $\pm$  0.002 & 0.564  $\pm$  0.003 \\
\midrule
\textsc{MaxNorm} & 0.674  $\pm$  0.052 & 0.674  $\pm$  0.053 & 0.531  $\pm$  0.002 & 0.507  $\pm$  0.002 \\
\textsc{NB-MaxNorm} & 0.371  $\pm$  0.050 & 0.371  $\pm$  0.049 & 0.541  $\pm$  0.006 & 0.520  $\pm$  0.007 \\
\textsc{LR-MaxNorm} &N/A & N/A& 0.544  $\pm$  0.004 & 0.521  $\pm$  0.005 \\
\textsc{1bitMC-MaxNorm} & 0.396  $\pm$  0.036 & 0.395  $\pm$  0.035 & 0.542  $\pm$  0.003 & 0.519  $\pm$  0.004 \\
\midrule
\textsc{WTN} & 3.791  $\pm$  0.032 & 3.790  $\pm$  0.035  & 0.551  $\pm$  0.002 & 0.528  $\pm$  0.002 \\
\textsc{NB-WTN} & 3.262  $\pm$  0.093 & 3.262  $\pm$  0.094 & 0.557  $\pm$  0.002 & 0.535  $\pm$  0.002 \\
\textsc{LR-WTN} &N/A & N/A& 0.553  $\pm$  0.002 & 0.532  $\pm$  0.002 \\
\textsc{1bitMC-WTN} & 3.788  $\pm$  0.039 & 3.787  $\pm$  0.042 & 0.551  $\pm$  0.002 & 0.528  $\pm$  0.002 \\
\midrule
\textsc{ExpoMF} & 0.820  $\pm$  0.005 & 0.822  $\pm$  0.005 & 1.170  $\pm$  0.008 & 1.218  $\pm$  0.009 \\
\arrayrulecolor{black}\bottomrule
  \end{tabular}
  \smallskip
  \caption{MSE-based metrics of matrix completion methods on synthetic datasets (average $\pm$ standard deviation across 10 experimental repeats).}
  \label{tab:mc}
\end{table*}

\subsection{Real-World Data}
{\bf Data.} We consider two real-world datasets. 1) \texttt{Coat}: the dataset contains ratings from 290 users on 300 items \citep{schnabel2016recommendations}. The dataset contains both MNAR ratings as well as MAR ratings. Both user and item features are available for the dataset. 2) \texttt{MovieLens-100k}: the dataset contains 100k ratings from 943 users on 1,682 movies, and it does not contain any MAR ratings \citep{harper2016movielens}.

{\bf Experiments setup.} 
Since the \texttt{Coat} dataset contains both MAR and MNAR data, we are able to train the algorithms on the MNAR data and test on the MAR data. In this way, the MSE (MAE) on the testing set directly reflect the matrix completion accuracy. For \texttt{MovieLens-100k}, we split the data into 90/10 train/test sets $10$ times. For both datasets, we use 5-fold cross-validation to tune the hyperparameters through grid search. The SNIPS related measures are computed on test data based on the propensities estimated from \textsc{1bitMC-PMF} using training data.

{\bf Results.}  The performance of each algorithm is presented in Table~\ref{tab:real3}. We report the MSE-based results; MAE results are in Appendix~\ref{sec:more-experiments}. Algorithms \textsc{1bitMC-SoftImpute} and \textsc{1bitMC-PMF} perform the best on \texttt{Coat} based on MSE, and \textsc{1bitMC-SVD} outperforms the rest on \texttt{MovieLens-100k}. The debiasing approach does not improve the accuracy for \textsc{MaxNorm} and \textsc{WTN}.



\begin{table}[t]
  \centering
  \begin{tabular}{lllll}
    \toprule
    \multirow{2}{*}{Algorithm} & \multicolumn{2}{c}{\texttt{Coat}}&\multicolumn{2}{c}{\texttt{MovieLens-100k}}  \\
    \cmidrule(r){2-5}
    ~    & MSE & SNIPS-MSE & MSE & SNIPS-MSE\\
    \midrule
    \textsc{PMF} & 1.000 & {\bf1.051} &  0.896  $\pm$  0.013 & 0.902  $\pm$  0.013 \\
    \textsc{NB-PMF} & 1.034 & 1.117 & N/A & N/A\\
    \textsc{LR-PMF} & 1.025 & 1.107 & N/A & N/A\\
    \textsc{1bitMC-PMF} & 0.999 &  1.052 & 0.845  $\pm$  0.012 & 0.853  $\pm$  0.011 \\
    \arrayrulecolor{black!30}\midrule
    \textsc{SVD} & 1.203 & 1.270 & 0.862  $\pm$  0.013 & 0.872  $\pm$  0.012 \\
    \textsc{NB-SVD} & 1.246 & 1.346 & N/A & N/A\\
    \textsc{LR-SVD} & 1.234 & 1.334 & N/A & N/A\\
    \textsc{1bitMC-SVD} & 1.202 & 1.272  & {\bf0.821  $\pm$  0.011} & {\bf0.832  $\pm$  0.011} \\
    \arrayrulecolor{black!30}\midrule
    \textsc{SVD++} & 1.208 & 1.248 & 0.838  $\pm$  0.013 & 0.849  $\pm$  0.012 \\
    \textsc{NB-SVD++} & 1.488 & 1.608 & N/A & N/A\\
    \textsc{LR-SVD++} & 1.418 & 1.532 & N/A & N/A\\
    \textsc{1bitMC-SVD++} & 1.248 & 1.274 & 0.833  $\pm$  0.012 & 0.843  $\pm$  0.011 \\
    \arrayrulecolor{black!30}\midrule
    \textsc{SoftImpute} & 1.064 & 1.150 & 0.929  $\pm$  0.015 & 0.950  $\pm$  0.015 \\
    \textsc{NB-SoftImpute} & 1.052 & 1.138 & N/A & N/A\\
    \textsc{LR-SoftImpute} & 1.069 & 1.156 & N/A & N/A\\
    \textsc{1bitMC-SoftImpute} & {\bf 0.998} & 1.078 & 0.933  $\pm$  0.014 & 0.953  $\pm$  0.014 \\
    \arrayrulecolor{black!30}\midrule
    \textsc{MaxNorm} & 1.168 & 1.263 & 0.911  $\pm$  0.011 & 0.925  $\pm$  0.011 \\
    \textsc{NB-MaxNorm} & 1.460 & 1.578 & N/A & N/A\\
    \textsc{LR-MaxNorm} & 1.537 & 1.662 & N/A & N/A\\
    \textsc{1bitMC-MaxNorm} & 1.471 & 1.590 & 0.977  $\pm$  0.017 & 0.992  $\pm$  0.019 \\
    \arrayrulecolor{black!30}\midrule
    \textsc{WTN} & 1.396 & 1.509 &0.939  $\pm$  0.013 & 0.952  $\pm$  0.013 \\
    \textsc{NB-WTN} & 1.329 & 1.437 & N/A & N/A\\
    \textsc{LR-WTN} & 1.340 & 1.448 & N/A & N/A\\
    \textsc{1bitMC-WTN} & 1.396 & 1.509 & 0.934  $\pm$  0.013 & 0.946  $\pm$  0.013 \\
     \arrayrulecolor{black!30}\midrule
    \textsc{ExpoMF} & 2.602 & 2.813 & 2.461  $\pm$  0.077 & 2.558  $\pm$  0.083 \\
    \arrayrulecolor{black}\bottomrule
  \end{tabular}
  \vspace{0.5em}
  \caption{MSE-based metrics of matrix completion methods on \texttt{Coat} and \texttt{MovieLens-100k} (results for \texttt{MovieLens-100k} are the averages $\pm$ standard deviations across 10 experimental repeats).}
  \label{tab:real3}
  \vspace{-1em}
  \vspace{-.25em}
  \vspace{-.25em}
\end{table}

\vspace{-.25em}
\vspace{-.25em}
\vspace{-.25em}
\section{Conclusions}
\vspace{-.25em}
\vspace{-.25em}
\vspace{-.25em}

In this paper, we examined the effectiveness of debiasing matrix completion algorithms using missingness probabilities (propensity scores) estimated via another matrix completion algorithm: \textsc{1bitMC} by \citet{davenport20141}, which relies on low nuclear norm structure, and which we apply to a fully-revealed missingness mask matrix (so we are  doing matrix \textit{denoising} rather than completion).
%
Our numerical experiments indicate that debiasing using \textsc{1bitMC} can achieve downstream matrix completion accuracy at least as good as debiasing using logistic regression and naive Bayes baselines, despite \textsc{1bitMC} not using auxiliary information such as row/column feature vectors. Moreover, debiasing matrix completion algorithms with \textsc{1bitMC} can boost accuracy, in some cases achieving the best or nearly the best performance across all algorithms we tested. These experimental findings suggest that a low nuclear norm assumption on missingness patterns is reasonable.



In terms of theoretical analysis, we have not addressed the full generality of MNAR data in matrix completion. For example, we still assume that each entry is revealed independent of other entries. In reality, one matrix entry being revealed could increase (or decrease) the chance of another entry being revealed. As another loose end, our theory breaks down when a missingness probability is exactly 0. For example, consider when the matrix to be completed corresponds to feature vectors collected from patients. A clinical measurement that only makes sense for women will have 0 probability of being revealed for men. In such scenarios, imputing such a missing value does not actually make sense. These are two open problems among many for robustly handling MNAR data with guarantees.



\clearpage

\bibliographystyle{plainnat}
\bibliography{nuclear_mnar}

\begin{thebibliography}{35}
\providecommand{\natexlab}[1]{#1}
\providecommand{\url}[1]{\texttt{#1}}
\expandafter\ifx\csname urlstyle\endcsname\relax
  \providecommand{\doi}[1]{doi: #1}\else
  \providecommand{\doi}{doi: \begingroup \urlstyle{rm}\Url}\fi

\bibitem[Boucheron et~al.(2013)Boucheron, Lugosi, and
  Massart]{boucheron2013concentration}
St{\'e}phane Boucheron, G{\'a}bor Lugosi, and Pascal Massart.
\newblock \emph{Concentration inequalities: A nonasymptotic theory of
  independence}.
\newblock Oxford university press, 2013.

\bibitem[Cai et~al.(2010)Cai, Cand{\`e}s, and Shen]{cai2010singular}
Jian-Feng Cai, Emmanuel~J. Cand{\`e}s, and Zuowei Shen.
\newblock A singular value thresholding algorithm for matrix completion.
\newblock \emph{SIAM Journal on Optimization}, 20\penalty0 (4):\penalty0
  1956--1982, 2010.

\bibitem[Cai and Zhou(2016)]{cai2016matrix}
T.~Tony Cai and Wen-Xin Zhou.
\newblock Matrix completion via max-norm constrained optimization.
\newblock \emph{Electronic Journal of Statistics}, 10\penalty0 (1):\penalty0
  1493--1525, 2016.

\bibitem[Cand{\`e}s and Recht(2009)]{candes2009exact}
Emmanuel~J. Cand{\`e}s and Benjamin Recht.
\newblock Exact matrix completion via convex optimization.
\newblock \emph{Foundations of Computational mathematics}, 9\penalty0
  (6):\penalty0 717, 2009.

\bibitem[Chatterjee(2015)]{chatterjee2015matrix}
Sourav Chatterjee.
\newblock Matrix estimation by universal singular value thresholding.
\newblock \emph{The Annals of Statistics}, 43\penalty0 (1):\penalty0 177--214,
  2015.

\bibitem[Davenport et~al.(2014)Davenport, Plan, Van Den~Berg, and
  Wootters]{davenport20141}
Mark~A. Davenport, Yaniv Plan, Ewout Van Den~Berg, and Mary Wootters.
\newblock 1-bit matrix completion.
\newblock \emph{Information and Inference}, 3\penalty0 (3):\penalty0 189--223,
  2014.

\bibitem[Dray and Josse(2015)]{dray2015principal}
St{\'e}phane Dray and Julie Josse.
\newblock Principal component analysis with missing values: a comparative
  survey of methods.
\newblock \emph{Plant Ecology}, 216\penalty0 (5):\penalty0 657--667, 2015.

\bibitem[Funk(2006)]{funk2006netflix}
Simon Funk.
\newblock Netflix update: Try this at home, 2006.

\bibitem[Harper and Konstan(2016)]{harper2016movielens}
F.~Maxwell Harper and Joseph~A. Konstan.
\newblock The {MovieLens} datasets: History and context.
\newblock \emph{ACM Transactions on Interactive Intelligent Systems (TiiS)},
  5\penalty0 (4):\penalty0 19, 2016.

\bibitem[Hesterberg(1995)]{hesterberg1995weighted}
Tim Hesterberg.
\newblock Weighted average importance sampling and defensive mixture
  distributions.
\newblock \emph{Technometrics}, 37\penalty0 (2):\penalty0 185--194, 1995.

\bibitem[Imbens and Rubin(2015)]{imbens2015causal}
Guido~W. Imbens and Donald~B. Rubin.
\newblock \emph{Causal Inference for Statistics, Social, and Biomedical
  Sciences}.
\newblock Cambridge University Press, 2015.

\bibitem[Keshavan et~al.(2010{\natexlab{a}})Keshavan, Montanari, and
  Oh]{keshavan2010matrix1}
Raghunandan~H Keshavan, Andrea Montanari, and Sewoong Oh.
\newblock Matrix completion from a few entries.
\newblock \emph{IEEE transactions on information theory}, 56\penalty0
  (6):\penalty0 2980--2998, 2010{\natexlab{a}}.

\bibitem[Keshavan et~al.(2010{\natexlab{b}})Keshavan, Montanari, and
  Oh]{keshavan2010matrix2}
Raghunandan~H. Keshavan, Andrea Montanari, and Sewoong Oh.
\newblock Matrix completion from noisy entries.
\newblock \emph{Journal of Machine Learning Research}, 11\penalty0
  (Jul):\penalty0 2057--2078, 2010{\natexlab{b}}.

\bibitem[Kluger et~al.(2003)Kluger, Basri, Chang, and
  Gerstein]{kluger2003spectral}
Yuval Kluger, Ronen Basri, Joseph~T Chang, and Mark Gerstein.
\newblock Spectral biclustering of microarray data: Coclustering genes and
  conditions.
\newblock \emph{Genome Research}, 13\penalty0 (4):\penalty0 703--716, 2003.

\bibitem[Koren(2008)]{koren2008factorization}
Yehuda Koren.
\newblock Factorization meets the neighborhood: a multifaceted collaborative
  filtering model.
\newblock In \emph{Proceedings of the 14th ACM SIGKDD International Conference
  on Knowledge Discovery and Data Mining}, pages 426--434. ACM, 2008.

\bibitem[Liang et~al.(2016)Liang, Charlin, McInerney, and
  Blei]{liang2016modeling}
Dawen Liang, Laurent Charlin, James McInerney, and David~M. Blei.
\newblock Modeling user exposure in recommendation.
\newblock In \emph{Proceedings of the 25th International Conference on World
  Wide Web}, pages 951--961. International World Wide Web Conferences Steering
  Committee, 2016.

\bibitem[Little and Rubin(2019)]{little2019statistical}
Roderick J.~A. Little and Donald~B. Rubin.
\newblock \emph{Statistical Analysis with Missing Data}.
\newblock Wiley, 3rd edition, 2019.

\bibitem[Mazumder et~al.(2010)Mazumder, Hastie, and
  Tibshirani]{mazumder2010spectral}
Rahul Mazumder, Trevor Hastie, and Robert Tibshirani.
\newblock Spectral regularization algorithms for learning large incomplete
  matrices.
\newblock \emph{Journal of Machine Learning Research}, 11\penalty0
  (Aug):\penalty0 2287--2322, 2010.

\bibitem[Mnih and Salakhutdinov(2008)]{mnih2008probabilistic}
Andriy Mnih and Ruslan~R Salakhutdinov.
\newblock Probabilistic matrix factorization.
\newblock In \emph{Advances in Neural Information Processing Systems}, pages
  1257--1264, 2008.

\bibitem[Parikh and Boyd(2014)]{parikh2014proximal}
Neal Parikh and Stephen Boyd.
\newblock Proximal algorithms.
\newblock \emph{Foundations and Trends{\textregistered} in Optimization},
  1\penalty0 (3):\penalty0 127--239, 2014.

\bibitem[Recht(2011)]{recht2011simpler}
Benjamin Recht.
\newblock A simpler approach to matrix completion.
\newblock \emph{Journal of Machine Learning Research}, 12\penalty0
  (Dec):\penalty0 3413--3430, 2011.

\bibitem[Schnabel et~al.(2016)Schnabel, Swaminathan, Singh, Chandak, and
  Joachims]{schnabel2016recommendations}
Tobias Schnabel, Adith Swaminathan, Ashudeep Singh, Navin Chandak, and Thorsten
  Joachims.
\newblock Recommendations as treatments: Debiasing learning and evaluation.
\newblock In \emph{International Conference on Machine Learning}, pages
  1670--1679, 2016.

\bibitem[Seginer(2000)]{seginer2000expected}
Yoav Seginer.
\newblock The expected norm of random matrices.
\newblock \emph{Combinatorics, Probability and Computing}, 9\penalty0
  (2):\penalty0 149--166, 2000.

\bibitem[Song et~al.(2016)Song, Lee, Li, and Shah]{song2016blind}
Dogyoon Song, Christina~E. Lee, Yihua Li, and Devavrat Shah.
\newblock Blind regression: Nonparametric regression for latent variable models
  via collaborative filtering.
\newblock In \emph{Advances in Neural Information Processing Systems}, pages
  2155--2163, 2016.

\bibitem[Sportisse et~al.(2018)Sportisse, Boyer, and
  Josse]{sportisse2018imputation}
Aude Sportisse, Claire Boyer, and Julie Josse.
\newblock Imputation and low-rank estimation with missing non at random data.
\newblock \emph{arXiv preprint arXiv:1812.11409}, 2018.

\bibitem[Sportisse et~al.(2019)Sportisse, Boyer, and
  Josse]{sportisse2019estimation}
Aude Sportisse, Claire Boyer, and Julie Josse.
\newblock Estimation and imputation in probabilistic principal component
  analysis with missing not at random data.
\newblock 2019.

\bibitem[Srebro and Salakhutdinov(2010)]{srebro2010collaborative}
Nathan Srebro and Ruslan~R. Salakhutdinov.
\newblock Collaborative filtering in a non-uniform world: Learning with the
  weighted trace norm.
\newblock In \emph{Advances in Neural Information Processing Systems}, pages
  2056--2064, 2010.

\bibitem[Steck(2010)]{steck2010training}
Harald Steck.
\newblock Training and testing of recommender systems on data missing not at
  random.
\newblock In \emph{Proceedings of the 16th ACM SIGKDD International Conference
  on Knowledge Discovery and Data Mining}, pages 713--722. ACM, 2010.

\bibitem[Swaminathan and Joachims(2015)]{swaminathan2015self}
Adith Swaminathan and Thorsten Joachims.
\newblock The self-normalized estimator for counterfactual learning.
\newblock In \emph{Advances in Neural Information Processing Systems}, pages
  3231--3239, 2015.

\bibitem[Thompson(2012)]{thompson2012sampling}
Steven~K. Thompson.
\newblock \emph{Sampling}.
\newblock Wiley, 3rd edition, 2012.

\bibitem[Trotter and Tukey(1956)]{trotter1956conditional}
Hale~F. Trotter and John~W. Tukey.
\newblock Conditional {M}onte {C}arlo for normal samples.
\newblock In \emph{Symposium on Monte Carlo Methods}, pages 64--79, 1956.

\bibitem[Wang et~al.(2018{\natexlab{a}})Wang, Gong, Zheng, and
  Zhang]{wang2018modeling}
Menghan Wang, Mingming Gong, Xiaolin Zheng, and Kun Zhang.
\newblock Modeling dynamic missingness of implicit feedback for recommendation.
\newblock In \emph{Advances in Neural Information Processing Systems}, pages
  6669--6678, 2018{\natexlab{a}}.

\bibitem[Wang et~al.(2018{\natexlab{b}})Wang, Zheng, Yang, and
  Zhang]{wang2018collaborative}
Menghan Wang, Xiaolin Zheng, Yang Yang, and Kun Zhang.
\newblock Collaborative filtering with social exposure: A modular approach to
  social recommendation.
\newblock In \emph{AAAI Conference on Artificial Intelligence},
  2018{\natexlab{b}}.

\bibitem[Wang et~al.(2019)Wang, Zhang, Sun, and Qi]{wang2019doubly}
Xiaojie Wang, Rui Zhang, Yu~Sun, and Jianzhong Qi.
\newblock Doubly robust joint learning for recommendation on data missing not
  at random.
\newblock In \emph{International Conference on Machine Learning}, 2019.

\bibitem[Zhu et~al.(2019)Zhu, Wang, and Samworth]{zhu2019high}
Ziwei Zhu, Tengyao Wang, and Richard~J Samworth.
\newblock High-dimensional principal component analysis with heterogeneous
  missingness.
\newblock \emph{arXiv preprint arXiv:1906.12125}, 2019.

\end{thebibliography}
\appendix
\newpage

\pagebreak
\begin{center}
\textbf{\Large Supplemental Material}
\end{center}

\section{Topic Modeling Structure}
\label{sec:topic-modeling}

Not only do the \texttt{Coat} and \texttt{MovieLens-100k} datasets have block structure, they can also be explained using topic modeling structure, which is again a low rank model for the propensity score matrix~$P$ (and is thus a special case of the general low nuclear norm structure we assume provided that~$P$ be bounded away from 0). We build on our biclustering data exploration example from Figure~\ref{fig:block-structure}. Specifically, in the \texttt{MovieLens-100k} dataset, each user can be thought of as a distribution over movie ``topics'', and each movie topic corresponds to a distribution over movie genres, as shown in Figure~\ref{fig:pr33}. Meanwhile, in the \texttt{Coat} dataset, each item can be represented as a distribution over user topics, and each user topic can be thought of as a distribution over user features, as shown in Figure~\ref{fig:pr34}.


\begin{figure}[h]
  \centering
  \includegraphics[width = 0.75\linewidth]{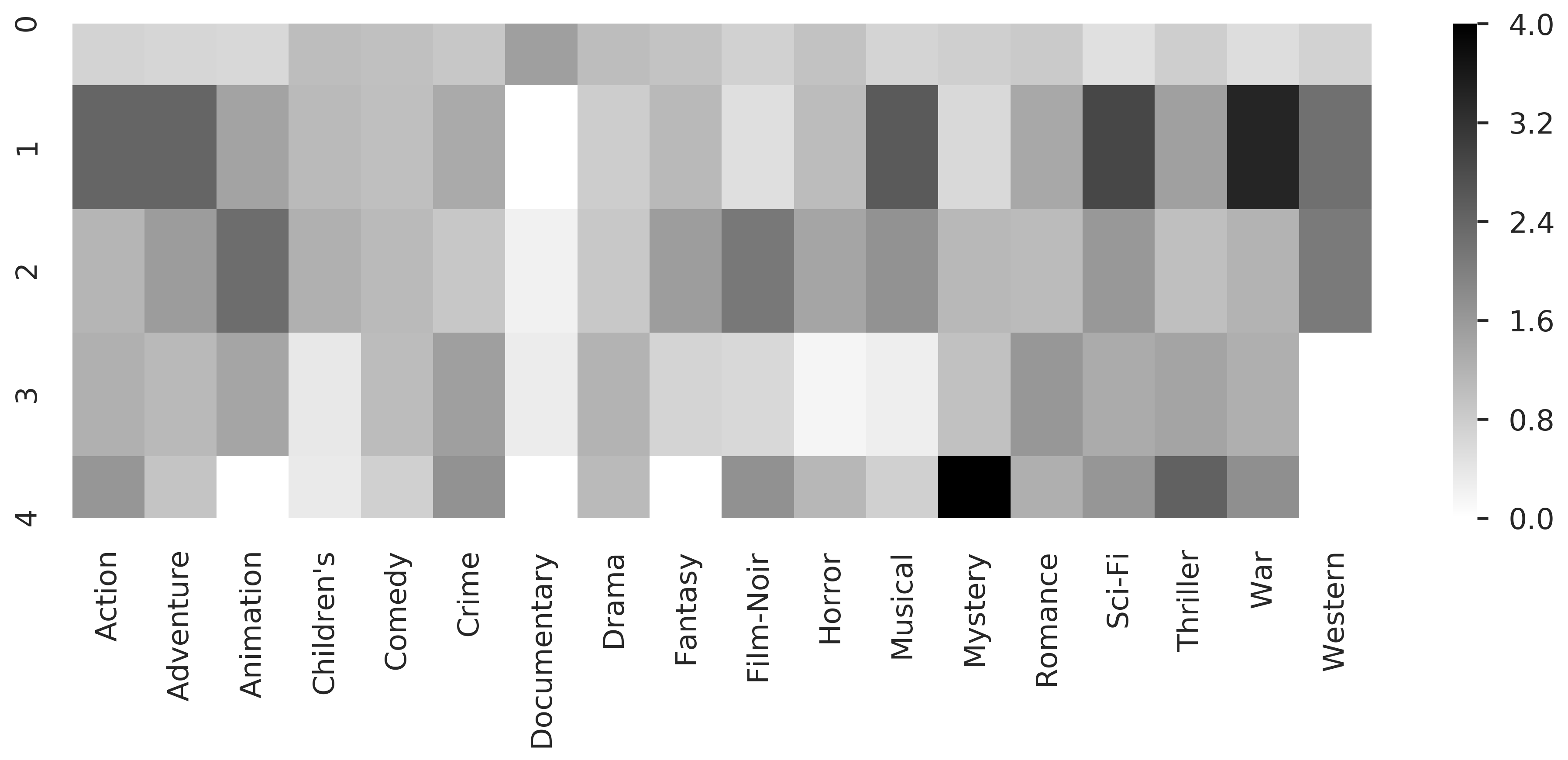}
  \caption{Average item features for each movie topic in  \texttt{MovieLens-100k}.}
  \label{fig:pr33}
\end{figure}

\begin{figure}[h]
  \centering
  \includegraphics[width = 0.75\linewidth]{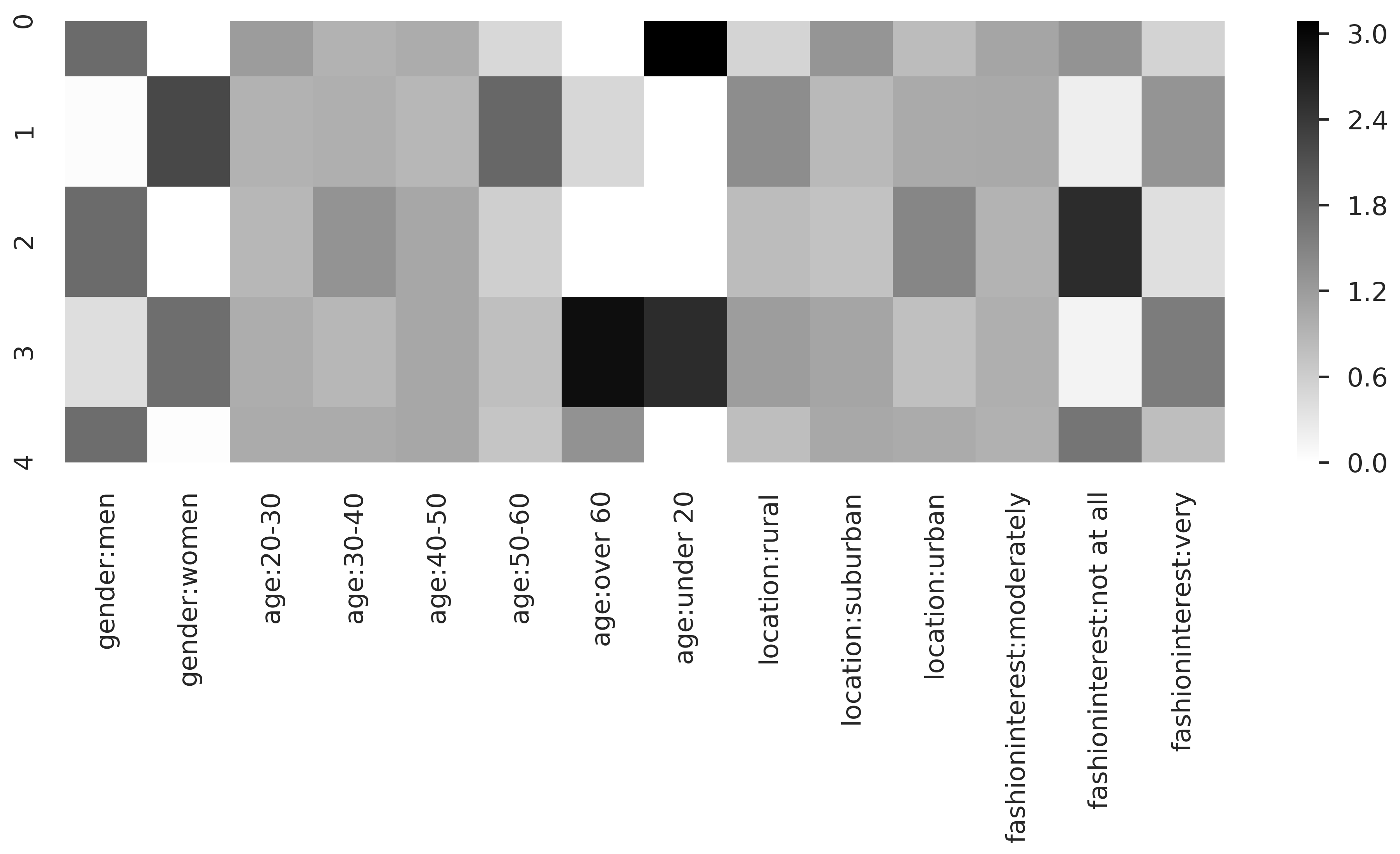}
  \caption{Average user features for each user topic in  \texttt{Coat}.}
  \label{fig:pr34}
\end{figure}

Topic models reveal co-occurrence information.
For example, movie topic~1 in Figure~\ref{fig:pr33} corresponds to users who tend to reveal ratings for action, adventure, musical, sci-fi, and war movies, but not documentaries. Movie topic~4 corresponds to users who tend to reveal ratings for mysteries and thrillers. Users can be associated with multiple topics to varying degrees. Ratings for documentaries tend to mostly be revealed by users associated with movie topic~0. We can also find such patterns in the \texttt{Coat} dataset, where we look at topics over rows instead of columns as an illustration. We can see what sorts of user features tend to be co-occur. For example, user topic 2 consists of men uninterested in fashion. Coats can be associated with different user topics to varying degrees.

Note that the biclustering uses only the missingness information. We are interpreting the biclustering results here with the help of user/item features to identify co-occurrence relationships.

\section{Proof of Theorem \ref{thm:P-est-main-theorem}}
\label{sec:pf-P-est-main-theorem}

Denote the objective function (which is a log likelihood) of optimization problem \eqref{eq:stage1-opt} as
\[
L_{M}(\Gamma):=\sum_{u=1}^{m}\sum_{i=1}^{n}[M_{u,i}\log \sigma(\Gamma_{u,i})+(1-M_{u,i})\log(1-\sigma(\Gamma_{u,i}))].
\]
We specialize Theorem 2 of \citet{davenport20141} to the setting in
which all entries of the matrix are observed. Their proof makes use
of an offset version of the log likelihood function:
\begin{align*}
\overline{L}_{M}(\Gamma) & :=L_{M}(\Gamma)-L_{M}(\bm{0}) 
 =\sum_{u=1}^{m}\sum_{i=1}^{n}\Big[M_{u,i}\log\frac{\sigma(\Gamma_{u,i})}{\sigma(0)}+(1-M_{u,i})\log\frac{1-\sigma(\Gamma_{u,i})}{1-\sigma(0)}\Big].
\end{align*}
We let $\mathcal{F}_{\tau,\gamma}$ denote the feasible set of optimization problem \eqref{eq:stage1-opt}, i.e., 
\[
\mathcal{F}_{\tau,\gamma}:=\big\{ \Gamma\in\mathbb{R}^{m\times n}\;:\;\|\Gamma\|_{*}\le\tau\sqrt{mn},\; \|\Gamma\|_{\max}\le\gamma \big\}.
\]
Moreover, we actually let the function $\sigma$ be a bit more general: $\sigma$ must be differentiable, and the following quantity must exist and be finite:
\begin{equation}
L_{\gamma}:=\sup_{x\in[-\gamma,\gamma]}\frac{|\sigma'(x)|}{\sigma(x)(1-\sigma(x))}. 
\label{eq:weird-L}
\end{equation}
We prove the following more general theorem.
\begin{thm}
\label{thm:P-est-helper}
Under Assumptions A1--A3, suppose that we run algorithm \textsc{1bitMC} with user-specified parameters satisfying $\tau\ge\theta$ and $\gamma\ge\alpha$ to obtain the estimate $\widehat{P}$ of propensity score matrix $P$. Let $\widehat{S}\in\mathbb{R}^{m\times n}$ be any matrix satisfying $\|\widehat{S}\|_{\max}\le\psi$ for some $\psi\ge\phi$. Let $\delta\in(0,1)$. Then there exists a universal constant $C>0$ such that provided that $m+n\ge C$, with probability at least $1-\frac{C}{m+n}-\delta$ over randomness in which entries are revealed in $X$, we simultaneously have
\begin{align}
\frac{1}{mn}\sum_{u=1}^{m}\sum_{i=1}^{n}(\widehat{P}_{u,i}-P_{u,i})^{2}
& \le4eL_{\gamma}\tau\Big(\frac{1}{\sqrt{m}}+\frac{1}{\sqrt{n}}\Big),
\label{eq:main-inequality1-more-general} \\
|L_{\text{IPS-MSE}}(\widehat{S}|\widehat{P})-L_{\text{full MSE}}(\widehat{S})|
& \le \frac{8\psi^2\sqrt{eL_{\gamma}\tau}}{\sigma(-\gamma)\sigma(-\alpha)}\Big(\frac{1}{m^{1/4}}+\frac{1}{n^{1/4}}\Big)
      + \frac{4\psi^{2}}{\sigma(-\alpha)}\sqrt{\frac{1}{2mn}\log\frac{2}{\delta}}.
\label{eq:main-inequality2-more-general}
\end{align}
\end{thm}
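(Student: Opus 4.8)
The plan is to prove the two displayed inequalities in sequence: first establish the propensity-recovery bound \eqref{eq:main-inequality1-more-general} by specializing the existing 1-bit matrix completion analysis, then bootstrap that bound into the debiasing guarantee \eqref{eq:main-inequality2-more-general} via an elementary triangle-inequality decomposition together with a concentration argument.

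For \eqref{eq:main-inequality1-more-general}, I would invoke Theorem 2 of \citet{davenport20141} in the degenerate regime where the sampling probability equals~$1$, i.e., every one of the $mn$ entries of the mask $M$ is revealed (recall $M$ is always fully observed; the randomness is only in its Bernoulli values). Since $\tau\ge\theta$ and $\gamma\ge\alpha$, the true parameter matrix $A$ lies in $\mathcal{F}_{\tau,\gamma}$, so the maximizer $\widehat{A}$ of the offset log-likelihood satisfies $\overline{L}_M(\widehat{A})\ge\overline{L}_M(A)$. Their argument then controls the per-entry squared probability error $\frac{1}{mn}\sum_{u,i}(\sigma(\widehat{A}_{u,i})-\sigma(A_{u,i}))^2$ by bounding the Rademacher complexity of the nuclear-norm ball (producing the $\sqrt{(m+n)/(mn)}$ rate) and relating the log-likelihood gap to per-entry squared error through the link function, which is precisely where $L_\gamma$ from \eqref{eq:weird-L} and the factor $e$ enter. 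Using $\sqrt{(m+n)/(mn)}=\sqrt{1/m+1/n}\le 1/\sqrt{m}+1/\sqrt{n}$ then yields exactly \eqref{eq:main-inequality1-more-general} on an event of probability at least $1-C/(m+n)$. I expect this step to be the main obstacle, since it carries all the empirical-process machinery; everything downstream is essentially bookkeeping.

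For \eqref{eq:main-inequality2-more-general}, I would split via the triangle inequality into a plug-in error term $|L_{\text{IPS-MSE}}(\widehat{S}|\widehat{P})-L_{\text{IPS-MSE}}(\widehat{S}|P)|$ and a concentration term $|L_{\text{IPS-MSE}}(\widehat{S}|P)-L_{\text{full MSE}}(\widehat{S})|$. For the plug-in term, writing the difference as $\frac{1}{mn}\sum_{(u,i)\in\Omega}(\widehat{S}_{u,i}-X_{u,i})^2(\frac{1}{\widehat{P}_{u,i}}-\frac{1}{P_{u,i}})$, I would bound $(\widehat{S}_{u,i}-X_{u,i})^2\le 4\psi^2$ (using $\|\widehat{S}\|_{\max}\le\psi$ and $\phi\le\psi$) and $1/(\widehat{P}_{u,i}P_{u,i})\le 1/(\sigma(-\gamma)\sigma(-\alpha))$, where $\widehat{P}_{u,i}\ge\sigma(-\gamma)$ follows from $\widehat{A}\in\mathcal{F}_{\tau,\gamma}$ and $P_{u,i}\ge\sigma(-\alpha)$ from A2. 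This reduces the term to $\frac{4\psi^2}{\sigma(-\gamma)\sigma(-\alpha)}\cdot\frac{1}{mn}\sum_{u,i}|\widehat{P}_{u,i}-P_{u,i}|$; Cauchy--Schwarz converts the $\ell_1$ average into the root of the $\ell_2$ average bounded by \eqref{eq:main-inequality1-more-general}, and $\sqrt{1/\sqrt{m}+1/\sqrt{n}}\le 1/m^{1/4}+1/n^{1/4}$ delivers the first summand of \eqref{eq:main-inequality2-more-general}.

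For the concentration term, I would note that $L_{\text{IPS-MSE}}(\widehat{S}|P)=\frac{1}{mn}\sum_{u,i}Z_{u,i}$ with $Z_{u,i}=\ind\{(u,i)\in\Omega\}(\widehat{S}_{u,i}-X^*_{u,i})^2/P_{u,i}$ independent across entries, each bounded in $[0,4\psi^2/\sigma(-\alpha)]$, and with mean $L_{\text{full MSE}}(\widehat{S})$ exactly as in the unbiasedness footnote. Hoeffding's inequality then gives the second summand $\frac{4\psi^2}{\sigma(-\alpha)}\sqrt{\frac{1}{2mn}\log\frac{2}{\delta}}$ with probability at least $1-\delta$. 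A union bound over this event and the probability-$(1-C/(m+n))$ event from the first step yields the simultaneous guarantee at confidence $1-C/(m+n)-\delta$, completing the proof.
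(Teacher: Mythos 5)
Your proposal is correct and follows essentially the same route as the paper: the paper likewise obtains \eqref{eq:main-inequality1-more-general} by specializing Theorem 2 of \citet{davenport20141} to the fully observed mask (though it re-derives the bound self-contained---optimality of $\widehat{A}$, Pinsker's inequality, and a symmetrization/contraction/Seginer bound on $\sup_{\Gamma\in\mathcal{F}_{\tau,\gamma}}|\overline{L}_{M}(\Gamma)-\mathbb{E}_{M}[\overline{L}_{M}(\Gamma)]|$---rather than citing that theorem verbatim, since its sampling model and constants do not plug in directly), and then proves \eqref{eq:main-inequality2-more-general} by exactly your triangle-inequality split, the same $4\psi^{2}/(\sigma(-\gamma)\sigma(-\alpha))$ plug-in bound with Cauchy--Schwarz, Hoeffding for the concentration term, and a union bound. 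The only cosmetic slip is your attribution of constants: $L_{\gamma}$ enters through the Lipschitz contraction step of the empirical-process bound and the factor $e$ through the Markov argument with $h=\log(m+n)$ moments, not through the likelihood-to-squared-error conversion, which Pinsker's inequality handles with no such constants.
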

We recover Theorem \ref{thm:P-est-main-theorem} by noting that for $\sigma$ chosen to be the
standard logistic function, we have $L_{\gamma}=1$ for all $\gamma>0$.

\subsection*{Proof of Theorem \ref{thm:P-est-helper}}

Under Assumption A1 and since $\tau\ge\theta$, note that $A\in\mathcal{F}_{\tau,\gamma}$.
By optimality of $\widehat{A}$ for optimization problem \eqref{eq:stage1-opt},
we have $L_{M}(\widehat{A})\ge L_{M}(A)$, which can written as
\begin{align*}
0 & \le L_{M}(\widehat{A})-L_{M}(A)\\
 & =\overline{L}_{M}(\widehat{A})-\overline{L}_{M}(A)\\
 & =(\overline{L}_{M}(\widehat{A})-\mathbb{E}_{M}[\overline{L}_{M}(\widehat{A})])-(\overline{L}_{M}(A)-\mathbb{E}_{M}[\overline{L}_{M}(A)])+\mathbb{E}_{M}[\overline{L}_{M}(\widehat{A})-\overline{L}_{M}(A)].
\end{align*}
Since matrices $\widehat{A}$ and $A$ are both in the set $\mathcal{F}_{\tau,\gamma}$,
the first two terms on the right-hand side can each be upper-bounded
by $\sup_{\Gamma\in\mathcal{F}_{\tau,\gamma}}|\overline{L}_{M}(\Gamma)-\mathbb{E}_{M}[\overline{L}_{M}(\Gamma)]|$.
Meanwhile, the third term on the right-hand side can be rewritten
as
\begin{align*}
 \mathbb{E}_{M}[\overline{L}_{M}(\widehat{A})-\overline{L}_{M}(A)]
 & =\sum_{u=1}^{m}\sum_{i=1}^{n}\mathbb{E}_{M_{u,i}}\Big[M_{u,i}\log\frac{\sigma(\widehat{A}_{u,i})}{\sigma(A_{u,i})}+(1-M_{u,i})\log\frac{1-\sigma(\widehat{A}_{u,i})}{1-\sigma(A_{u,i})}\Big]\\
 & =\sum_{u=1}^{m}\sum_{i=1}^{n}\Big[P_{u,i}\log\frac{\sigma(\widehat{A}_{u,i})}{\sigma(A_{u,i})}+(1-P_{u,i})\log\frac{1-\sigma(\widehat{A}_{u,i})}{1-\sigma(A_{u,i})}\Big]\\
 & =\sum_{u=1}^{m}\sum_{i=1}^{n}\Big[P_{u,i}\log\frac{\widehat{P}_{u,i}}{P_{u,i}}+(1-P_{u,i})\log\frac{1-\widehat{P}_{u,i}}{1-P_{u,i}}\Big]\\
 & =-\sum_{u=1}^{m}\sum_{i=1}^{n}\Big[P_{u,i}\log\frac{P_{u,i}}{\widehat{P}_{u,i}}+(1-P_{u,i})\log\frac{1-P_{u,i}}{1-\widehat{P}_{u,i}}\Big]\\
 & =-\sum_{u=1}^{m}\sum_{i=1}^{n}D\big(\text{Ber}(P_{u,i})\|\text{Ber}(\widehat{P}_{u,i})\big).
\end{align*}
Putting together the pieces, we have
\begin{align}
0 & \le(\overline{L}_{M}(\widehat{A})-\mathbb{E}_{M}[\overline{L}_{M}(\widehat{A})])-(\overline{L}_{M}(A)-\mathbb{E}_{M}[\overline{L}_{M}(A)])+\mathbb{E}_{M}[\overline{L}_{M}(\widehat{A})-\overline{L}_{M}(A)]\nonumber \\
 & \le2\sup_{\Gamma\in\mathcal{F}_{\tau,\gamma}}|\overline{L}_{M}(\Gamma)-\mathbb{E}_{M}[\overline{L}_{M}(\Gamma)]|+\mathbb{E}_{M}[\overline{L}_{M}(\widehat{A})-\overline{L}_{M}(A)]\nonumber \\
 & =2\sup_{\Gamma\in\mathcal{F}_{\tau,\gamma}}|\overline{L}_{M}(\Gamma)-\mathbb{E}_{M}[\overline{L}_{M}(\Gamma)]|-\sum_{u=1}^{m}\sum_{i=1}^{n}D\big(\text{Ber}(P_{u,i})\|\text{Ber}(\widehat{P}_{u,i})\big).\label{eq:P-est-helper1}
\end{align}
By Pinsker's inequality,
\begin{align*}
D\big(\text{Ber}(P_{u,i})\|\text{Ber}(\widehat{P}_{u,i})\big) & \ge2\|\text{Ber}(P_{u,i})-\text{Ber}(\widehat{P}_{u,i})\|_{\text{TV}}^{2}\\
 & =2\Big[\frac{1}{2}(|P_{u,i}-\widehat{P}_{u,i}|+|(1-P_{u,i})-(1-\widehat{P}_{u,i})|)\Big]^{2}\\
 & =2(\widehat{P}_{u,i}-P_{u,i})^{2}.
\end{align*}
Therefore,
\begin{align}
\sum_{u=1}^{m}\sum_{i=1}^{n}D\big(\text{Ber}(P_{u,i})\|\text{Ber}(\widehat{P}_{u,i})\big) & \ge2\sum_{u=1}^{m}\sum_{i=1}^{n}(\widehat{P}_{u,i}-P_{u,i})^{2}.\label{eq:P-est-helper2}
\end{align}
Combining inequalities \eqref{eq:P-est-helper1} and \eqref{eq:P-est-helper2},
we get
\[
\sum_{u=1}^{m}\sum_{i=1}^{n}(\widehat{P}_{u,i}-P_{u,i})^{2}\le\sup_{\Gamma\in\mathcal{F}_{\tau,\gamma}}|\overline{L}_{M}(\Gamma)-\mathbb{E}_{M}[\overline{L}_{M}(\Gamma)]|.
\]
The next lemma upper-bounds $\sup_{\Gamma\in\mathcal{F}_{\tau,\gamma}}|\overline{L}_{M}(\Gamma)-\mathbb{E}_{M}[\overline{L}_{M}(\Gamma)]|$.
\begin{lem}
\label{lem:P-est-main-lemma}For the above setup, if $m+n\ge3$, then
there exists a universal constant $C>0$ such that
\begin{equation}
\mathbb{P}\Big(\sup_{\Gamma\in\mathcal{F}_{\tau,\gamma}}|\overline{L}_{M}(\Gamma)-\mathbb{E}_{M}[\overline{L}_{M}(\Gamma)]|\ge4eL_{\gamma}\tau\sqrt{mn}(\sqrt{m}+\sqrt{n})\Big)\le\frac{C}{m+n}.
\label{eq:bad-event1-bound}
\end{equation}
\end{lem}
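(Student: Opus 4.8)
The plan is to recognize that $\overline{L}_{M}(\Gamma)-\mathbb{E}_{M}[\overline{L}_{M}(\Gamma)]$ is an empirical process indexed by $\Gamma\in\mathcal{F}_{\tau,\gamma}$: it is a sum over entries $(u,i)$ of independent, centered random variables, where each summand is a fixed function of the single Bernoulli variable $M_{u,i}$ evaluated at the coordinate $\Gamma_{u,i}$. Writing
\[
h_{u,i}(x):=M_{u,i}\log\frac{\sigma(x)}{\sigma(0)}+(1-M_{u,i})\log\frac{1-\sigma(x)}{1-\sigma(0)},
\]
note that $h_{u,i}(0)=0$, and that for $|x|\le\gamma$ the map $x\mapsto h_{u,i}(x)$ is $L_{\gamma}$-Lipschitz, since its derivative equals $\sigma'(x)/\sigma(x)$ or $-\sigma'(x)/(1-\sigma(x))$, each bounded in absolute value by the quantity $L_{\gamma}$ from \eqref{eq:weird-L}. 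The max-norm constraint in $\mathcal{F}_{\tau,\gamma}$ is exactly what keeps every feasible coordinate $\Gamma_{u,i}$ inside $[-\gamma,\gamma]$ so that this Lipschitz bound applies, and the offset in $\overline{L}_M$ is what guarantees $h_{u,i}(0)=0$, a property I will need for the contraction step.

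First I would pass from the centered process to a Rademacher process via the standard symmetrization inequality, bounding $\mathbb{E}\big[\sup_{\Gamma}|\overline{L}_M(\Gamma)-\mathbb{E}_M[\overline{L}_M(\Gamma)]|\big]$ by $2\,\mathbb{E}_{M,\varepsilon}\big[\sup_{\Gamma}|\sum_{u,i}\varepsilon_{u,i}h_{u,i}(\Gamma_{u,i})|\big]$, where $\{\varepsilon_{u,i}\}$ are i.i.d.\ Rademacher signs. Conditioning on $M$ and invoking the Ledoux--Talagrand contraction principle, the $L_\gamma$-Lipschitz, zero-at-origin maps $h_{u,i}$ can be stripped off at the cost of a factor $2L_\gamma$, reducing the problem to the linear Rademacher process $\mathbb{E}_\varepsilon\big[\sup_{\Gamma}|\langle\mathcal{E},\Gamma\rangle|\big]$ with $\mathcal{E}=(\varepsilon_{u,i})$. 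Using $\mathcal{F}_{\tau,\gamma}\subseteq\{\Gamma:\|\Gamma\|_*\le\tau\sqrt{mn}\}$ (dropping the max-norm constraint only shrinks the set) together with the duality between the nuclear norm and the spectral norm $\|\cdot\|_{\mathrm{op}}$, this linear process equals $\tau\sqrt{mn}\,\|\mathcal{E}\|_{\mathrm{op}}$. A classical random-matrix bound then gives $\mathbb{E}\|\mathcal{E}\|_{\mathrm{op}}\le C_0(\sqrt{m}+\sqrt{n})$ for a universal constant, so in expectation the supremum is of order $L_\gamma\tau\sqrt{mn}(\sqrt{m}+\sqrt{n})$.

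To upgrade this expectation bound to the high-probability statement \eqref{eq:bad-event1-bound}, I would view the supremum as a function of the independent coordinates $\{M_{u,i}\}$; flipping a single $M_{u,i}$ changes each summand by at most a bounded amount (controlled by $\sup_{|x|\le\gamma}|\log(\sigma(x)/\sigma(0))|$ and the analogous $(1-\sigma)$ term), so a bounded-differences argument combined with the tail bound for $\|\mathcal{E}\|_{\mathrm{op}}$ yields the deviation probability $C/(m+n)$. The main obstacle is the final bookkeeping: tracking constants through symmetrization, contraction, and the operator-norm bound so that they collapse to the stated prefactor $4e$, and selecting the precise random-matrix and concentration results whose failure probability is exactly of the polynomial form $C/(m+n)$ (rather than an exponentially small tail) while still controlling the deviation at the claimed scale. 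Since this is precisely the specialization of Theorem~2 of \citet{davenport20141} to the fully-observed case, the skeleton of the argument is dictated by their proof.
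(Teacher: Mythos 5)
Your first stage---symmetrization, stripping the $L_\gamma$-Lipschitz, zero-at-origin functions via contraction, nuclear/spectral duality, and an operator-norm bound for a random sign matrix---is exactly the skeleton of the paper's argument, and that part is sound. The genuine gap is in your final concentration step, and it is not mere bookkeeping. First, ``bounded differences combined with the tail bound for $\|\mathcal{E}\|_{\mathrm{op}}$'' is structurally confused: once you symmetrize in expectation, the Rademacher matrix $\mathcal{E}$ is integrated out, so there is no $\|\mathcal{E}\|_{\mathrm{op}}$ random variable left in the tail event for $Z:=\sup_{\Gamma\in\mathcal{F}_{\tau,\gamma}}|\overline{L}_M(\Gamma)-\mathbb{E}_M[\overline{L}_M(\Gamma)]|$. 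Second, McDiarmid on $Z$ as a function of the $mn$ independent entries $M_{u,i}$ has bounded-difference constant of order $L_\gamma\min(\gamma,\tau\sqrt{mn})$ (flipping $M_{u,i}$ swaps between two functions each bounded by $L_\gamma|\Gamma_{u,i}|$), so the deviation exponent at the relevant scale $t\asymp L_\gamma\tau\sqrt{mn}(\sqrt{m}+\sqrt{n})$ is of order $-\tau^2(m+n)/\gamma^2$. This introduces a dependence on $\gamma/\tau$ that the lemma does not have: for the logistic link $L_\gamma=1$ for \emph{all} $\gamma$, and the lemma must hold for arbitrary user-specified $\tau,\gamma$, yet your bound fails to deliver even $C/(m+n)$ once $\gamma\gtrsim\tau\sqrt{(m+n)/\log(m+n)}$. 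Your instinct that something should be ``exactly of the polynomial form $C/(m+n)$'' is also backwards---bounded differences, when it works, gives exponential tails; the polynomial tail here comes from a different device.

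The paper's actual mechanism is a moment-method argument run at order $h=\log(m+n)$: by Markov, $\mathbb{P}(Z\ge z)\le \mathbb{E}_M[Z^h]/z^h$, and the entire chain (ghost-sample symmetrization via Jensen with the elementary inequality $|a+b|^h\le 2^{h-1}(|a|^h+|b|^h)$, the \emph{moment} form of the contraction principle---Theorem 11.6 of Boucheron et al.\ with $\Psi(x)=x^h$, costing $(2L_\gamma)^h$---and duality) is carried out on the $h$-th moment, ending with Seginer's bound $\mathbb{E}[\|\Xi\circ\overline{M}\|_2^h]\le C(m^{h/2}+n^{h/2})$, whose crucial feature is that the universal constant $C$ sits \emph{outside} the $h$-th power. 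Taking $h$-th roots gives $(\mathbb{E}[Z^h])^{1/h}\le C^{1/h}\,4L_\gamma\tau\sqrt{mn}(\sqrt{m}+\sqrt{n})$, and choosing the threshold $z$ a factor of $e$ larger yields the failure probability $e^{-h}\cdot C=C/(m+n)$. This single device simultaneously produces the $4e$ prefactor and the polynomial tail, with no $\gamma$-dependence beyond $L_\gamma$; your expectation-plus-McDiarmid route cannot reproduce either without additional restrictions on $\gamma/\tau$ that the statement does not impose.
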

Once this lemma is established, the theorem's first main inequality \eqref{eq:main-inequality1-more-general} readily follows since with probability at least $1-\frac{C}{m+n}$ (for which we clearly want $m+n\ge C$),
\begin{align*}
\frac{1}{mn}\sum_{u=1}^{m}\sum_{i=1}^{n}(\widehat{P}_{u,i}-P_{u,i})^{2} & \le\frac{1}{mn}[4eL_{\gamma}\tau\sqrt{mn}(\sqrt{m}+\sqrt{n})] 
 =4eL_{\gamma}\tau\Big(\frac{1}{\sqrt{m}}+\frac{1}{\sqrt{n}}\Big),
\end{align*}
which establishes inequality \eqref{eq:main-inequality1}.
Note that Lemma \ref{lem:P-est-main-lemma} asks that $m+n\ge3$. Since $C=8\cdot2^{1/4}\cdot e^2=70.2969\ldots$, asking that $m+n\ge C$ implies that $m+n\ge3$.

We now derive the theorem's second main inequality \eqref{eq:main-inequality2-more-general}, which is a consequence of the first main inequality \eqref{eq:main-inequality1-more-general}. By the triangle inequality,
\begin{align}
&|L_{\text{IPS-MSE}}(\widehat{S}|\widehat{P})-L_{\text{full MSE}}(\widehat{S})| \nonumber \\
&\quad \le
|L_{\text{IPS-MSE}}(\widehat{S}|\widehat{P})-L_{\text{IPS-MSE}}(\widehat{S}|P)|
+ |L_{\text{IPS-MSE}}(\widehat{S}|P)-L_{\text{full MSE}}(\widehat{S})|.
\label{eq:main-triangle}
\end{align}
We can readily bound the first RHS term as follows:
\begingroup
\allowdisplaybreaks
\begin{align*}
|L_{\text{IPS-MSE}}(\widehat{S}|\widehat{P})-L_{\text{IPS-MSE}}(\widehat{S}|P)| & =\bigg|\frac{1}{mn}\sum_{(u,i)\in\Omega}\Big[\frac{(\widehat{S}_{u,i}-X_{u,i})^{2}}{\widehat{P}_{u,i}}-\frac{(\widehat{S}_{u,i}-X_{u,i})^{2}}{P_{u,i}}\Big]\bigg|\\
 & =\bigg|\frac{1}{mn}\sum_{(u,i)\in\Omega}\frac{P_{u,i}-\widehat{P}_{u,i}}{\widehat{P}_{u,i}P_{u,i}}(\widehat{S}_{u,i}-X_{u,i})^{2}\bigg|\\
 \text{worst case error }|\widehat{S}_{u,i}-X_{u,i}|\le2\psi\quad
 & \le\frac{4\psi^{2}}{mn}\sum_{(u,i)\in\Omega}\frac{|P_{u,i}-\widehat{P}_{u,i}|}{\widehat{P}_{u,i}P_{u,i}} \\
 \text{note that }\widehat{P}_{u,i}\ge \sigma(-\gamma)\text{ and }P_{u,i}\ge \sigma(-\alpha)\quad
 & \le\frac{4\psi^{2}}{\sigma(-\gamma)\sigma(-\alpha)mn}\sum_{(u,i)\in\Omega}|P_{u,i}-\widehat{P}_{u,i}| \\
 \text{basic inequality relating $\ell_1$ and $\ell_2$ norms}\quad
 & \le\frac{4\psi^{2}}{\sigma(-\gamma)\sigma(-\alpha)mn}\sqrt{|\Omega|}\sqrt{\sum_{(u,i)\in\Omega}(\widehat{P}_{u,i}-P_{u,i})^{2}}\\
 & =\frac{4\psi^{2}}{\sigma(-\gamma)\sigma(-\alpha)}\sqrt{\frac{|\Omega|}{mn}}\sqrt{\frac{1}{mn}\sum_{(u,i)\in\Omega}(\widehat{P}_{u,i}-P_{u,i})^{2}} \\
 \text{the theorem's main inequality \eqref{eq:main-inequality1-more-general}}\quad
 & \le\frac{4\psi^{2}}{\sigma(-\gamma)\sigma(-\alpha)}\sqrt{\frac{|\Omega|}{mn}}\sqrt{4eL_{\gamma}\tau\Big(\frac{1}{\sqrt{m}}+\frac{1}{\sqrt{n}}\Big)} \\
 \text{fraction of observed entries ${\textstyle \frac{|\Omega|}{mn}}$ is at most 1}\quad
 & \le\frac{4\psi^{2}}{\sigma(-\gamma)\sigma(-\alpha)}\sqrt{4eL_{\gamma}\tau\Big(\frac{1}{\sqrt{m}}+\frac{1}{\sqrt{n}}\Big)}\\
 & = \frac{8\psi^{2}\sqrt{eL_{\gamma}\tau}}{\sigma(-\gamma)\sigma(-\alpha)}\sqrt{\frac{1}{\sqrt{m}}+\frac{1}{\sqrt{n}}} \\
 & \le\frac{8\psi^{2}\sqrt{eL_{\gamma}\tau}}{\sigma(-\gamma)\sigma(-\alpha)}\Big(\frac{1}{m^{1/4}}+\frac{1}{n^{1/4}}\Big),
\end{align*}
\endgroup
where the last step uses the fact that $(a+b)^p \le a^p + b^p$ for all $p\in[0,1]$ and $a,b\in\mathbb{R}_+$.

The second RHS term in inequality \eqref{eq:main-triangle} can be bounded with Hoeffding's inequality. First, recall that
\begin{align*}
L_{\text{IPS-MSE}}(\widehat{S}|P)=\frac{1}{mn}\sum_{u=1}^m\sum_{i=1}^n \ind\{(u,i)\in\Omega\} \frac{(\widehat{S}_{u,i}-X_{u,i}^*)^{2}}{P_{u,i}},
\end{align*}
which is an average of $mn$ random variables (here, the only randomness we are considering is in which entries are revealed $\Omega$). One can check that $\mathbb{E}_{\Omega}[L_{\text{IPS-MSE}}(\widehat{S}|P)]=L_{\text{full MSE}}(\widehat{S})$. Note that $\frac{(\widehat{S}_{u,i}-X_{u,i}^*)^{2}}{P_{u,i}}\le\frac{(2\psi)^2}{\sigma(-\alpha)}=\frac{4\psi^2}{\sigma(-\alpha)}$. Thus, each of the terms in the double summation above is bounded in the interval $[0,\frac{4\psi^2}{\sigma(-\alpha)}]$, so by Hoeffding's inequality,
\begin{equation}
\mathbb{P}\bigg(\, |L_{\text{IPS-MSE}}(\widehat{S}|P) - L_{\text{full MSE}}(\widehat{S})| \ge \frac{4\psi^{2}}{\sigma(-\alpha)}\sqrt{\frac{1}{2mn}\log\frac{2}{\delta}}\,\,\bigg)
\le \delta.\label{eq:bias-hoeffding-bound}
\end{equation}
When this bad event does not happen, then the second RHS term of triangle inequality \eqref{eq:main-triangle} is at most $\frac{4\psi^{2}}{\sigma(-\alpha)}\sqrt{\frac{1}{2mn}\log\frac{2}{\delta}}$, so putting together the pieces, we get the theorem's second main inequality~\eqref{eq:main-inequality2-more-general}.
By a union bound, the bad events corresponding to bounds \eqref{eq:bad-event1-bound} and \eqref{eq:bias-hoeffding-bound} both don't happen with probability at least $1-\frac{C}{m+n}-\delta$.

\begin{proof}[Proof of Lemma \ref{lem:P-est-main-lemma}]
By Markov's inequality, for any $h>0$ and $z>0$, we have
\begin{align}
 \mathbb{P}\Big(\sup_{\Gamma\in\mathcal{F}_{\tau,\gamma}}|\overline{L}_{M}(\Gamma)-\mathbb{E}_{M}[\overline{L}_{M}(\Gamma)]|\ge z\Big)
 & =\mathbb{P}\Big(\sup_{\Gamma\in\mathcal{F}_{\tau,\gamma}}|\overline{L}_{M}(\Gamma)-\mathbb{E}_{M}[\overline{L}_{M}(\Gamma)]|^{h}\ge z^{h}\Big)\nonumber \\
 & \quad \le\frac{\mathbb{E}_{M}\big[\sup_{\Gamma\in\mathcal{F}_{\tau,\gamma}}|\overline{L}_{M}(\Gamma)-\mathbb{E}_{M}[\overline{L}_{M}(\Gamma)]|^{h}\big]}{z^{h}}.\label{eq:P-est-markov}
\end{align}
We will be setting $h=\log(m+n)$ (which is greater than 1 under the
assumption that $m+n\ge3$) and
$
z=4eL_{\gamma}\tau\sqrt{mn}(\sqrt{m}+\sqrt{n}).
$

We next upper-bound the numerator term $\mathbb{E}_{M}\big[\sup_{\Gamma\in\mathcal{F}_{\tau,\gamma}}|\overline{L}_{M}(\Gamma)-\mathbb{E}_{M}[\overline{L}_{M}(\Gamma)]|^{h}\big]$.
To do this, we apply a standard symmetrization argument. Let matrix
$M'\in\mathbb{R}^{m\times n}$ be independently sampled such that
$M'$ and $M$ have the same distribution. Then using Jensen's inequality,
\begingroup
\allowdisplaybreaks
\begin{align}
 \mathbb{E}_{M}\Big[\sup_{\Gamma\in\mathcal{F}_{\tau,\gamma}}|\overline{L}_{M}(\Gamma)-\mathbb{E}_{M}[\overline{L}_{M}(\Gamma)]|^{h}\Big]
 & =\mathbb{E}_{M}\Big[\sup_{\Gamma\in\mathcal{F}_{\tau,\gamma}}|\overline{L}_{M}(\Gamma)-\mathbb{E}_{M}[\overline{L}_{M}(\Gamma)]|^{h}\Big]\nonumber \\
 & =\mathbb{E}_{M}\Big[\sup_{\Gamma\in\mathcal{F}_{\tau,\gamma}}|\overline{L}_{M}(\Gamma)-\mathbb{E}_{M'}[\overline{L}_{M'}(\Gamma)]|^{h}\Big]\nonumber \\
 & =\mathbb{E}_{M}\Big[\sup_{\Gamma\in\mathcal{F}_{\tau,\gamma}}|\mathbb{E}_{M'}[\overline{L}_{M}(\Gamma)-\overline{L}_{M'}(\Gamma)]|^{h}\Big]\nonumber \\
 & \le\mathbb{E}_{M}\bigg[\mathbb{E}_{M'}\Big[\sup_{\Gamma\in\mathcal{F}_{\tau,\gamma}}|\overline{L}_{M}(\Gamma)-\overline{L}_{M'}(\Gamma)|^{h}\Big]\bigg]\nonumber \\
 & =\mathbb{E}_{M,M'}\Big[\sup_{\Gamma\in\mathcal{F}_{\tau,\gamma}}|\overline{L}_{M}(\Gamma)-\overline{L}_{M'}(\Gamma)|^{h}\Big].\label{eq:P-est-helper3}
\end{align}
\endgroup
In applying Jensen's inequality, we are using the fact that as a function
of $M'$, the function $\sup_{\Gamma\in\mathcal{F}_{\tau,\gamma}}|\overline{L}_{M}(\Gamma)-\overline{L}_{M'}(\Gamma)|^{h}$
(for $h\ge1)$ is the pointwise supremum of convex functions, so it
is still convex.

Next, we examine the random variable $\overline{L}_{M}(\Gamma)-\overline{L}_{M'}(\Gamma)$.
We shall be introducing independently sampled Rademacher random variables
$\xi_{u,i}\in\{\pm1\}$ for $u\in[m]$ and $i\in[n]$. Note that
\begin{align*}
 \overline{L}_{M}(\Gamma)-\overline{L}_{M'}(\Gamma)
 & =\sum_{u=1}^{m}\sum_{i=1}^{n}\Big[M_{u,i}\log\frac{\sigma(\Gamma_{u,i})}{\sigma(0)}+(1-M_{u,i})\log\frac{1-\sigma(\Gamma_{u,i})}{1-\sigma(0)}\Big]\\
 & \quad-\sum_{u=1}^{m}\sum_{i=1}^{n}\Big[M_{u,i}'\log\frac{\sigma(\Gamma_{u,i})}{\sigma(0)}+(1-M_{u,i}')\log\frac{1-\sigma(\Gamma_{u,i})}{1-\sigma(0)}\Big]\\
 & =\sum_{u=1}^{m}\sum_{i=1}^{n}\Big[M_{u,i}\log\frac{\sigma(\Gamma_{u,i})}{\sigma(0)}+(1-M_{u,i})\log\frac{1-\sigma(\Gamma_{u,i})}{1-\sigma(0)}\\
 & \qquad\qquad\;\,-M_{u,i}'\log\frac{\sigma(\Gamma_{u,i})}{\sigma(0)}-(1-M_{u,i}')\log\frac{1-\sigma(\Gamma_{u,i})}{1-\sigma(0)}\Big]
\end{align*}
has the same distribution as the random variable
\begin{align*}
 \overline{L}_{M}(\Gamma)-\overline{L}_{M'}(\Gamma)
 & =\sum_{u=1}^{m}\sum_{i=1}^{n}\xi_{u,i}\Big[M_{u,i}\log\frac{\sigma(\Gamma_{u,i})}{\sigma(0)}+(1-M_{u,i})\log\frac{1-\sigma(\Gamma_{u,i})}{1-\sigma(0)}\\
 & \qquad\qquad\;\,\quad\quad-M_{u,i}'\log\frac{\sigma(\Gamma_{u,i})}{\sigma(0)}-(1-M_{u,i}')\log\frac{1-\sigma(\Gamma_{u,i})}{1-\sigma(0)}\Big].
\end{align*}
Then, using the fact that $|a+b|^{p}\le2^{p-1}(|a|^{p}+|b|^{p})$
for $p\ge1$ and $a,b\in\mathbb{R}$,
\begin{align}
 & \mathbb{E}_{M,M'}\Big[\sup_{\Gamma\in\mathcal{F}_{\tau,\gamma}}|\overline{L}_{M}(\Gamma)-\overline{L}_{M'}(\Gamma)|^{h}\Big]\nonumber \\
 & =\mathbb{E}_{M,M'}\Bigg[\sup_{\Gamma\in\mathcal{F}_{\tau,\gamma}}\bigg|\sum_{u=1}^{m}\sum_{i=1}^{n}\xi_{u,i}\Big[M_{u,i}\log\frac{\sigma(\Gamma_{u,i})}{\sigma(0)}+(1-M_{u,i})\log\frac{1-\sigma(\Gamma_{u,i})}{1-\sigma(0)}\nonumber \\
 & \qquad\qquad\;\,\quad\quad\qquad\qquad\qquad-M_{u,i}'\log\frac{\sigma(\Gamma_{u,i})}{\sigma(0)}-(1-M_{u,i}')\log\frac{1-\sigma(\Gamma_{u,i})}{1-\sigma(0)}\Big]\bigg|^{h}\Bigg]\nonumber \\
 & \le2^{h-1}\mathbb{E}_{M,M'}\Bigg[\sup_{\Gamma\in\mathcal{F}_{\tau,\gamma}}\bigg(\bigg|\sum_{u=1}^{m}\sum_{i=1}^{n}\xi_{u,i}\Big[M_{u,i}\log\frac{\sigma(\Gamma_{u,i})}{\sigma(0)}+(1-M_{u,i})\log\frac{1-\sigma(\Gamma_{u,i})}{1-\sigma(0)}\Big]\bigg|^{h}\nonumber \\
 & \qquad\qquad\;\,\quad\quad\qquad\quad+\bigg|\sum_{u=1}^{m}\sum_{i=1}^{n}\xi_{u,i}\Big[M_{u,i}'\log\frac{\sigma(\Gamma_{u,i})}{\sigma(0)}+(1-M_{u,i}')\log\frac{1-\sigma(\Gamma_{u,i})}{1-\sigma(0)}\Big]\bigg|^{h}\bigg)\Bigg] \nonumber \\
 & \le2^{h-1}\Bigg(\mathbb{E}_{M,M'}\bigg[\sup_{\Gamma\in\mathcal{F}_{\tau,\gamma}}\bigg|\sum_{u=1}^{m}\sum_{i=1}^{n}\xi_{u,i}\Big[M_{u,i}\log\frac{\sigma(\Gamma_{u,i})}{\sigma(0)}+(1-M_{u,i})\log\frac{1-\sigma(\Gamma_{u,i})}{1-\sigma(0)}\Big]\bigg|^{h}\bigg]\nonumber \\
 & \quad\qquad\quad+\mathbb{E}_{M,M'}\bigg[\sup_{\Gamma\in\mathcal{F}_{\tau,\gamma}}\bigg|\sum_{u=1}^{m}\sum_{i=1}^{n}\xi_{u,i}\Big[M_{u,i}'\log\frac{\sigma(\Gamma_{u,i})}{\sigma(0)}+(1-M_{u,i}')\log\frac{1-\sigma(\Gamma_{u,i})}{1-\sigma(0)}\Big]\bigg|^{h}\bigg]\Bigg)\nonumber \\
 & =2^{h}\mathbb{E}_{M}\bigg[\sup_{\Gamma\in\mathcal{F}_{\tau,\gamma}}\bigg|\sum_{u=1}^{m}\sum_{i=1}^{n}\xi_{u,i}\Big[M_{u,i}\log\frac{\sigma(\Gamma_{u,i})}{\sigma(0)}+(1-M_{u,i})\log\frac{1-\sigma(\Gamma_{u,i})}{1-\sigma(0)}\Big]\bigg|^{h}\bigg].\label{eq:P-est-helper4}
\end{align}
At this point, we use a contraction argument. As a reminder,
\[
L_{\gamma}:=\sup_{x\in[-\gamma,\gamma]}\frac{|\sigma'(x)|}{\sigma(x)(1-\sigma(x))}.
\]
Thus, for any $x\in[-\gamma,\gamma]$,
\[
\Big|\frac{d}{dx}\log\frac{\sigma(x)}{\sigma(0)}\Big|=\Big|\frac{d}{dx}\log \sigma(x)\Big|=\Big|\frac{1}{\sigma(x)}\sigma'(x)\Big|\le\Big|\frac{\sigma'(x)}{\sigma(x)(1-\sigma(x))}\Big|\le L_{\gamma},
\]
i.e., $x\mapsto\log\frac{\sigma(x)}{\sigma(0)}$ is $L_{\gamma}$-Lipschitz.
A similar argument can be used to justify that $x\mapsto\log\frac{1-\sigma(x)}{1-\sigma(0)}$
is $L_{\gamma}$-Lipschitz over $[-\gamma,\gamma]$. Hence, $x\mapsto\frac{1}{L_{\gamma}}\log\frac{\sigma(x)}{\sigma(0)}$
and $x\mapsto\frac{1}{L_{\gamma}}\log\frac{1-\sigma(x)}{1-\sigma(0)}$ are both
contractions (i.e., 1-Lipschitz). Applying the second inequality of
Theorem 11.6 by \citet{boucheron2013concentration} (with $\Psi(x):=x^{h}$)
and defining $\overline{M}_{u,i}=2M_{u,i}-1\in\{\pm1\}$,
\begingroup
\allowdisplaybreaks
\begin{align}
 & \mathbb{E}_{M}\bigg[\sup_{\Gamma\in\mathcal{F}_{\tau,\gamma}}\bigg|\sum_{u=1}^{m}\sum_{i=1}^{n}\xi_{u,i}\Big[M_{u,i}\log\frac{\sigma(\Gamma_{u,i})}{\sigma(0)}+(1-M_{u,i})\log\frac{1-\sigma(\Gamma_{u,i})}{1-\sigma(0)}\Big]\bigg|^{h}\bigg]\nonumber \\
 & =\mathbb{E}_{M}\bigg[\Psi\bigg(\sup_{\Gamma\in\mathcal{F}_{\tau,\gamma}}\bigg|\sum_{u=1}^{m}\sum_{i=1}^{n}\xi_{u,i}\Big[M_{u,i}\log\frac{\sigma(\Gamma_{u,i})}{\sigma(0)}+(1-M_{u,i})\log\frac{1-\sigma(\Gamma_{u,i})}{1-\sigma(0)}\Big]\bigg|\bigg)\bigg]\nonumber \\
 & =(2L_{\gamma})^{h}\mathbb{E}_{M}\bigg[\Psi\bigg(\frac{1}{2L_{\gamma}}\sup_{\Gamma\in\mathcal{F}_{\tau,\gamma}}\bigg|\sum_{u=1}^{m}\sum_{i=1}^{n}\xi_{u,i}\Big[M_{u,i}\log\frac{\sigma(\Gamma_{u,i})}{\sigma(0)}+(1-M_{u,i})\log\frac{1-\sigma(\Gamma_{u,i})}{1-\sigma(0)}\Big]\bigg|\bigg)\bigg]\nonumber \\
 & \le(2L_{\gamma})^{h}\mathbb{E}_{M}\bigg[\Psi\bigg(\sup_{\Gamma\in\mathcal{F}_{\tau,\gamma}}\bigg|\sum_{u=1}^{m}\sum_{i=1}^{n}\xi_{u,i}\Big[M_{u,i}\Gamma_{u,i}-(1-M_{u,i})\Gamma_{u,i}\Big]\bigg|\bigg)\bigg]\nonumber \\
 & =(2L_{\gamma})^{h}\mathbb{E}_{M}\bigg[\Psi\bigg(\sup_{\Gamma\in\mathcal{F}_{\tau,\gamma}}\bigg|\sum_{u=1}^{m}\sum_{i=1}^{n}\xi_{u,i}\overline{M}_{u,i}\Gamma_{u,i}\bigg|\bigg)\bigg]\nonumber \\
 & =(2L_{\gamma})^{h}\mathbb{E}_{M}\bigg[\sup_{\Gamma\in\mathcal{F}_{\tau,\gamma}}\bigg|\sum_{u=1}^{m}\sum_{i=1}^{n}\xi_{u,i}\overline{M}_{u,i}\Gamma_{u,i}\bigg|^{h}\bigg]\nonumber \\
 & =(2L_{\gamma})^{h}\mathbb{E}_{M}\Big[\sup_{\Gamma\in\mathcal{F}_{\tau,\gamma}}|\langle\Xi\circ\overline{M},\Gamma\rangle|^{h}\Big],\label{eq:P-est-helper5}
\end{align}
\endgroup
where $\Xi\in\{\pm1\}^{m\times n}$ has its $(u,i)$-th entry given
by $\xi_{u,i}$, ``$\circ$'' denotes the Hadamard product, and
$\langle\cdot,\cdot\rangle$ denotes the trace inner product.

Next, we use the result that $|\langle A,B\rangle|\le\|A\|_{2}\|B\|_{*}$,
so
\begin{align}
\mathbb{E}_{M}\Big[\sup_{\Gamma\in\mathcal{F}_{\tau,\gamma}}|\langle\Xi\circ\overline{M},\Gamma\rangle|^{h}\Big] & \le\mathbb{E}_{M}\Big[\sup_{\Gamma\in\mathcal{F}_{\tau,\gamma}}\|\Xi\circ\overline{M}\|_{2}^{h}\|\Gamma\|_{*}^{h}\Big]\nonumber \\
 & \le\mathbb{E}_{M}\Big[\sup_{\Gamma\in\mathcal{F}_{\tau,\gamma}}\|\Xi\circ\overline{M}\|_{2}^{h}(\alpha\sqrt{rmn})^{h}\Big]\nonumber \\
 & =(\tau\sqrt{mn})^{h}\mathbb{E}_{M}[\|\Xi\circ\overline{M}\|_{2}^{h}].\label{eq:P-est-helper6}
\end{align}
Finally, applying Theorem 1.1 of \citet{seginer2000expected}, there
exists a universal constant $C>0$ such that
\begin{align}
\mathbb{E}_{M}[\|\Xi\circ\overline{M}\|_{2}^{h}] & \le C(m^{h/2}+n^{h/2}).\label{eq:P-est-helper7}
\end{align}
In fact, $C=8\cdot2^{1/4}\cdot e^{2}=70.2969\ldots$

At this point, stringing together inequalities \eqref{eq:P-est-helper3}, \eqref{eq:P-est-helper4}, \eqref{eq:P-est-helper5}, \eqref{eq:P-est-helper6}, and \eqref{eq:P-est-helper7}, we get
\begingroup
\allowdisplaybreaks
\begin{align*}
 & \mathbb{E}_{M}\Big[\sup_{\Gamma\in\mathcal{F}_{\tau,\gamma}}|\overline{L}_{M}(\Gamma)-\mathbb{E}_{M}[\overline{L}_{M}(\Gamma)]|^{h}\Big]\\
 & \quad \le\mathbb{E}_{M,M'}\Big[\sup_{\Gamma\in\mathcal{F}_{\tau,\gamma}}|\overline{L}_{M}(\Gamma)-\overline{L}_{M'}(\Gamma)|^{h}\Big]\\
 & \quad \le2^{h}\mathbb{E}_{M}\bigg[\sup_{\Gamma\in\mathcal{F}_{\tau,\gamma}}\bigg|\sum_{u=1}^{m}\sum_{i=1}^{n}\xi_{u,i}\Big[M_{u,i}\log\frac{\sigma(\Gamma_{u,i})}{\sigma(0)}+(1-M_{u,i})\log\frac{1-\sigma(\Gamma_{u,i})}{1-\sigma(0)}\Big]\bigg|^{h}\bigg]\\
 & \quad \le2^{h}(2L_{\gamma})^{h}\mathbb{E}_{M}\Big[\sup_{\Gamma\in\mathcal{F}_{\tau,\gamma}}|\langle\Xi\circ\overline{M},\Gamma\rangle|^{h}\Big]\\
 & \quad \le2^{h}(2L_{\gamma})^{h}(\tau\sqrt{mn})^{h}\mathbb{E}_{M}[\|\Xi\circ\overline{M}\|_{2}^{h}]\\
 & \quad \le2^{h}(2L_{\gamma})^{h}(\tau\sqrt{mn})^{h}C(m^{h/2}+n^{h/2})\\
 & \quad =C(4L_{\gamma}\tau\sqrt{mn})^{h}(m^{h/2}+n^{h/2}).
\end{align*}
\endgroup
Lastly, using the fact that $(a+b)^{p}\le a^{p}+b^{p}$ for $p\in[0,1]$
and $a,b\in\mathbb{R}_{+}$
\begin{align*}
 \Big(\mathbb{E}_{M}\Big[\sup_{\Gamma\in\mathcal{F}_{\tau,\gamma}}|\overline{L}_{M}(\Gamma)-\mathbb{E}_{M}[\overline{L}_{M}(\Gamma)]|^{h}\Big]\Big)^{1/h}
 & \le[C(4L_{\gamma}\tau\sqrt{mn})^{h}(m^{h/2}+n^{h/2})]^{1/h}\\
 & \le C^{1/h}4L_{\gamma}\tau\sqrt{mn}(\sqrt{m}+\sqrt{n}).
\end{align*}
Finally, by choosing $h=\log(m+n)$ and $z=4eL_{\gamma}\tau\sqrt{mn}(\sqrt{m}+\sqrt{n})$
in inequality \eqref{eq:P-est-markov},
\begin{align*}
 \mathbb{P}\Big(\sup_{\Gamma\in\mathcal{F}_{\tau,\gamma}}|\overline{L}_{M}(\Gamma)-\mathbb{E}_{M}[\overline{L}_{M}(\Gamma)]|\ge z\Big)
 & \le\frac{\mathbb{E}_{M}\big[\sup_{\Gamma\in\mathcal{F}_{\tau,\gamma}}|\overline{L}_{M}(\Gamma)-\mathbb{E}_{M}[\overline{L}_{M}(\Gamma)]|^{h}\big]}{z^{h}}.\\
 & \le\frac{[C^{1/\log(m+n)}4L_{\gamma}\tau\sqrt{mn}(\sqrt{m}+\sqrt{n})]^{\log(m+n)}}{[4eL_{\gamma}\tau\sqrt{mn}(\sqrt{m}+\sqrt{n})]^{\log(m+n)}}\\
 & =\frac{C}{m+n}.\qedhere
\end{align*}
\end{proof}

\section{Modifying \textsc{1bitMC} to Allow for Propensity Scores of 1}
\label{sec:1bitMC-modified}

We now discuss how to modify \textsc{1bitMC} along with its theoretical analysis to allow for entries in the propensity score matrix $P$ to be 1. It suffices to make a single change to the algorithm: we replace the feasible set~$\mathcal{F}_{\tau,\gamma}$ in optimization problem~\eqref{eq:stage1-opt} by
\begin{align}
\mathcal{F}_{\tau,\gamma,\varphi}
&:=\big\{ \Gamma\in\mathbb{R}^{m\times n}:\tripleBar \Gamma\tripleBar_{*}\le\tau\sqrt{mn}, \nonumber \\
&\!\!\!\qquad\qquad\qquad\qquad \Gamma_{u,i}\ge-\gamma\text{ for all }u,i, \nonumber \\
&\!\!\!\qquad\qquad\qquad\qquad \Gamma_{u,i}\le\varphi\text{ for all }u,i\text{ s.t.~}M_{u,i}=0 \big\}, \label{eq:new-feasible-set}
\end{align}
where we have introduced a new user-specified parameter $\varphi\in(-\gamma,\gamma)$.
The resulting modified optimization problem is still convex if the original optimization program was convex (which depends on the choice of $\sigma$).
The key idea for the modification is that we allow $\Gamma_{u,i}$ to be as large as possible for entries where $M_{u,i}=1$ (to allow for $\sigma(\Gamma_{u,i})=1$, assuming that $\sigma$ monotonically increases to 1). However, when $M_{u,i}=0$, we enforce that $\Gamma_{u,i}$ cannot be too large. For example, if the $j$-th column is always observed ($M_{u,j}=1$ for all $u$), then there would be no upper bound constraint on any element in the $j$-th column of~$\Gamma$.

For completeness, we present this modified version of \textsc{1bitMC} in Algorithm~\ref{alg:1bitMC-mod}, which we call \textsc{1bitMC-modified}; note that we now intentionally use $\Sigma$ rather than $\sigma$ to denote the link function to avoid confusion as we will take $\sigma$ to be the standard logistic function and $\Sigma$ to a be different function in our theoretical analysis.

\begin{algorithm}[h]
\DontPrintSemicolon
\caption{\textsc{1bitMC-modified}\label{alg:1bitMC-mod}}%
\KwData{Binary matrix $M\in\{0,1\}^{m\times n}$, nuclear norm constraint parameter $\tau>0$, lower bound parameter $\gamma>0$, upper bound parameter $\varphi>-\gamma$, function $\Sigma:\mathbb{R}\rightarrow[0,1]$ (maps real number to probability)}
\KwResult{Estimate $\widehat{P}$ of $P$}
Solve optimization problem~\eqref{eq:stage1-opt} with feasible set $\mathcal{F}_{\tau,\gamma}$ replaced by $\mathcal{F}_{\tau,\gamma,\varphi}$ as given in equation~\eqref{eq:new-feasible-set}.\;
Set $\widehat{P}_{u,i}:=\sigma(\widehat{A}_{u,i})$ for all $u\in[m],i\in[n]$.\;
\end{algorithm}

\subsection*{Theoretical Analysis}

How the theory changes is more involved.
A key theoretical consequence of using feasible set $\mathcal{F}_{\tau,\gamma,\varphi}$ is that we will only be able to accurate estimate entries of $P$ that are in the set $[\Sigma(-\gamma),\Sigma(\varphi)]\cup\{1\}$. We tolerate error in estimating entries of $P$ that are in the ``critical'' interval $(\Sigma(\varphi),1)$ (with $\varphi$ chosen to be sufficiently large, this interval length could be made arbitrarily small). We denote the fraction of entries in $P$ that are in the critical interval as
\[
f_{\text{critical}}(m,n)
:=\frac{1}{mn}\sum_{u=1}^{m}\sum_{i=1}^{n}\ind\{P_{u,i}\in(\Sigma(\varphi),1)\}.
\]
We no longer assume that the true propensity score matrix $P$ is linked to parameter matrix $A$ via the standard logistic function $\sigma$. Instead, we reparameterize $P$ via $P_{u,i}=\Sigma(A_{u,i})$, where
\begin{align}
\Sigma(x) & :=\begin{cases}
\sigma(x) & \text{for }x<-\gamma,\\
\sigma(x)+\underbrace{\frac{1}{2}\Big(1+\frac{x}{\gamma}\Big)(1-\sigma(\gamma))}_{\text{\ensuremath{\substack{\text{linear correction term that is}\\
\text{0 at }x=-\gamma\text{ and }1-\sigma(\gamma)\text{ at }x=\gamma
}
}}} & \text{for }x\in[-\gamma,\gamma],\\
1 & \text{for }x>\gamma.
\end{cases}
\label{eq:fancy-link}
\end{align}
The above choice of $\Sigma$ depends on algorithm parameter $\gamma$. Observe that $A_{u,i}\ge\gamma$ implies that $P_{u,i}=1$ (previously when linking with the standard logistic function, we could not achieve $P_{u,i}=1$ for a finite $A_{u,i}$ value). Our theoretical guarantee for \textsc{1bitMC-modified} depends on the following quantity that summarizes 
Lipschitz smoothness information involving $\log\Sigma$ and $\log(1-\Sigma)$:
\begin{equation*}
\Upsilon_{\gamma,\varphi}:=\max\Big\{1+\frac{1}{2\gamma},\,\frac{1}{1-\Sigma(\varphi)}\Big(\frac{1}{4}+\frac{1}{2\gamma}\Big)\Big\}.
\end{equation*}
Next, we replace Assumption A2 with the following much more general assumption:
\begin{itemize}
\item[\textbf{A2$\bm{'}$.}] There exists some $p_{\min}>0$ such that $P_{u,i}\ge p_{\min}$ for all $u\in[m]$ and $i\in[n]$.
\end{itemize}
We are now ready to state our theoretical guarantee for \textsc{1bitMC-modified}.
\begin{thm}\label{thm:P-est-main-theorem-modified}
Under Assumptions A1, A2$'$, and A3, suppose that we run algorithm \textsc{1bitMC-modified} with $\Sigma$ as given in equation~\eqref{eq:fancy-link}, $\tau\ge\theta$, $\Sigma(-\gamma)\le p_{\min}$, and $\varphi\in(-\gamma,\gamma)$ to obtain the estimate $\widehat{P}$ of propensity score matrix $P$. Let $\widehat{S}\in\mathbb{R}^{m\times n}$ be any matrix satisfying $\tripleBar \widehat{S}\tripleBar_{\max}\le\psi$ for some $\psi\ge\phi$. Let $\delta\in(0,1)$. Then there exists a universal constant $C>0$ such that provided that $m+n\ge C$, with probability at least $1-\frac{C}{m+n}-\delta$ over randomness in which entries are revealed in $X$, we simultaneously have
\begin{align}
\frac{1}{mn}\sum_{u=1}^{m}\sum_{i=1}^{n}(\widehat{P}_{u,i}-P_{u,i})^{2}
& \le 8e\Upsilon_{\gamma,\varphi}\tau\Big(\frac{1}{\sqrt{m}}+\frac{1}{\sqrt{n}}\Big) + \frac{(1-\Sigma(\varphi))^2}{2} f_{\text{critical}}(m,n),
\label{eq:main-inequality1-modified} \\
|L_{\text{IPS-MSE}}(\widehat{S}|\widehat{P})-L_{\text{full MSE}}(\widehat{S})| 
&\le \frac{4\psi^2}{\Sigma(-\gamma)p_{\min}}\Big[\sqrt{8e\Upsilon_{\gamma,\varphi}\tau}\Big(\frac{1}{m^{1/4}}+\frac{1}{n^{1/4}}\Big) \nonumber \\
&\quad\qquad\qquad\quad\;\, + (1 - \Sigma(\varphi)) \sqrt{ \frac{f_{\text{critical}}(m,n)}{2}} \Big] \nonumber \\
&\quad
      + \frac{4\psi^{2}}{p_{\min}}\sqrt{\frac{1}{2mn}\log\frac{2}{\delta}}.
\label{eq:main-inequality2-modified}
\end{align}
\end{thm}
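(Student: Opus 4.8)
The plan is to reprise the proof of Theorem~\ref{thm:P-est-helper} almost verbatim, adapting only the three places where the modified algorithm differs: the link $\Sigma$ in place of the logistic $\sigma$, the $M$-dependent feasible set $\mathcal{F}_{\tau,\gamma,\varphi}$ in place of $\mathcal{F}_{\tau,\gamma}$, and the critical interval $(\Sigma(\varphi),1)$ on which no accuracy is promised. First I would note that the debiasing bound~\eqref{eq:main-inequality2-modified} follows from the estimation bound~\eqref{eq:main-inequality1-modified} by exactly the triangle-inequality-plus-Hoeffding argument used for~\eqref{eq:main-inequality2-more-general}: the only changes are that the lower bounds $\widehat{P}_{u,i}\ge\Sigma(-\gamma)$ and $P_{u,i}\ge p_{\min}$ now replace $\sigma(-\gamma)$ and $\sigma(-\alpha)$, and that the square root of the two-term right-hand side of~\eqref{eq:main-inequality1-modified} is split via $\sqrt{a+b}\le\sqrt a+\sqrt b$ to produce the two summands of~\eqref{eq:main-inequality2-modified}. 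So the substantive work is entirely in~\eqref{eq:main-inequality1-modified}.

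For that bound I would start, as before, from optimality of $\widehat{A}$. The obstruction is that $A$ itself need not lie in $\mathcal{F}_{\tau,\gamma,\varphi}$: it satisfies $\|A\|_*\le\theta\sqrt{mn}\le\tau\sqrt{mn}$ and (since $p_{\min}\ge\Sigma(-\gamma)$ and $\Sigma$ is increasing) $A_{u,i}\ge-\gamma$, but it can violate $A_{u,i}\le\varphi$ precisely on the \emph{critical-and-missing} entries, where $P_{u,i}\in(\Sigma(\varphi),1)$ (equivalently $A_{u,i}\in(\varphi,\gamma)$) and $M_{u,i}=0$. I would therefore introduce a feasible surrogate $\widetilde{A}$ that equals $A$ off the critical set and is clipped to $\varphi$ on the critical set (clipping defined through $P$, so $\widetilde{A}$ is deterministic), and run the decomposition $0\le L_M(\widehat{A})-L_M(\widetilde{A})=(\text{fluctuation of }\widehat A)-(\text{fluctuation of }\widetilde A)+(g(\widehat A)-g(\widetilde A))$, where $g(\Gamma):=\mathbb{E}_M[\overline{L}_M(\Gamma)]$. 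The population gap $g(\widehat A)-g(\widetilde A)$ equals $-\sum_{\text{good}}D(\mathrm{Ber}(P_{u,i})\|\mathrm{Ber}(\widehat P_{u,i}))$ plus a critical-entry remainder, so that Pinsker on the good entries delivers $\sum_{\text{good}}(\widehat P_{u,i}-P_{u,i})^2$, while the critical entries are bounded crudely, producing the additive $\tfrac{(1-\Sigma(\varphi))^2}{2}f_{\text{critical}}(m,n)$ term.

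The fluctuation terms are controlled by a near-copy of Lemma~\ref{lem:P-est-main-lemma}. The symmetrization, the reduction to $\mathbb{E}_M[\sup_\Gamma|\langle\Xi\circ\overline M,\Gamma\rangle|^h]$, the bound $|\langle A,B\rangle|\le\|A\|_2\|B\|_*$ with $\|\Gamma\|_*\le\tau\sqrt{mn}$, and Seginer's operator-norm bound all carry over unchanged, since the nuclear-norm constraint defining $\mathcal{F}_{\tau,\gamma,\varphi}$ is identical to that of $\mathcal{F}_{\tau,\gamma}$. Two adjustments arise in the contraction step. First, because the upper-bound constraint forces $\Gamma_{u,i}\le\varphi$ wherever $M_{u,i}=0$, the map $x\mapsto\log(1-\Sigma(x))$ is only ever evaluated on $(-\infty,\varphi]$, where $1-\Sigma(x)\ge 1-\Sigma(\varphi)>0$ keeps it Lipschitz; together with the Lipschitz bound on $x\mapsto\log\Sigma(x)$ over $[-\gamma,\gamma]$, a short calculation shows both maps are $\Upsilon_{\gamma,\varphi}$-Lipschitz. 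Second, since $\Sigma$ lacks the symmetry $\sigma(-x)=1-\sigma(x)$ that let the logistic proof merge the two log-terms into a single $\overline M_{u,i}\Gamma_{u,i}$, I would split $\overline{L}_M$ into its $\log\Sigma$ and $\log(1-\Sigma)$ pieces and contract each separately; this costs one extra factor of $2$ and yields the supremum bound $8e\Upsilon_{\gamma,\varphi}\tau\sqrt{mn}(\sqrt m+\sqrt n)$, hence the leading term of~\eqref{eq:main-inequality1-modified}.

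I expect the main obstacle to be the surrogate and critical-interval bookkeeping rather than the concentration machinery. Specifically: (i) verifying that the clipped $\widetilde{A}$ still obeys $\|\widetilde{A}\|_*\le\tau\sqrt{mn}$, which is delicate because there is no slack when $\tau=\theta$ and entrywise clipping is not nuclear-norm non-expansive; (ii) handling the fact that $\mathcal{F}_{\tau,\gamma,\varphi}$ itself depends on $M$, so the supremum inside the symmetrization is over an $M$-dependent set and one must keep $\log(1-\Sigma)$ away from its singularity on exactly the $M_{u,i}=0$ coordinates; and (iii) pinning down the precise critical-entry constant $\tfrac{(1-\Sigma(\varphi))^2}{2}$. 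Everything else is a routine re-run of the proof of Theorem~\ref{thm:P-est-helper}.
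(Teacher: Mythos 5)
Your reduction of \eqref{eq:main-inequality2-modified} to \eqref{eq:main-inequality1-modified} matches the paper exactly (triangle inequality plus Hoeffding, with $\Sigma(-\gamma)$ and $p_{\min}$ in place of $\sigma(-\gamma)$ and $\sigma(-\alpha)$, and $\sqrt{a+b}\le\sqrt{a}+\sqrt{b}$ to split the two terms), and your contraction analysis --- $\log\Sigma$ on $[-\gamma,\gamma]$ when $M_{u,i}=1$, $\log(1-\Sigma)$ on $[-\gamma,\varphi]$ when $M_{u,i}=0$ thanks to the new constraint in \eqref{eq:new-feasible-set}, each $\Upsilon_{\gamma,\varphi}$-Lipschitz --- is precisely the paper's Case 1/Case 2 argument. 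But for the core bound \eqref{eq:main-inequality1-modified} the paper takes a different route that is engineered to avoid exactly the obstacle you flag as (i). It never clips $A$ inside the likelihood-optimality argument. Instead it first proves Lemma~\ref{lem:P-est-main-theorem-modified}: under the \emph{extra} assumption that $P_{u,i}\in[p_{\min},p_{\text{critical}}]\cup\{1\}$ with $\Sigma(\varphi)\ge p_{\text{critical}}$, the true $A$ is itself feasible in $\mathcal{F}_{\tau,\gamma,\varphi}$ (the cap $\Gamma_{u,i}\le\varphi$ binds only where $M_{u,i}=0$, and entries with $P_{u,i}=1$ are revealed almost surely), so the comparison $L_M(\widehat{A})\ge L_M(A)$ runs against $A$ unchanged and yields the original constant $4e\Upsilon_{\gamma,\varphi}\tau$. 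The general case is then handled entirely \emph{outside} the likelihood machinery: project $P$ entrywise onto a matrix $P^{\dagger}$ satisfying the lemma's assumption (critical entries rounded to $\Sigma(\varphi)$ or $1$, whichever is nearer, so the per-entry projection error is at most $\frac{1-\Sigma(\varphi)}{2}$), and use $\|\widehat{P}-P\|_F^2\le2\|\widehat{P}-P^{\dagger}\|_F^2+2\|P^{\dagger}-P\|_F^2$. The doubling of the lemma's bound is what produces the $8e$, and $2\big(\frac{1-\Sigma(\varphi)}{2}\big)^2 f_{\text{critical}}(m,n)$ is what produces the $\frac{(1-\Sigma(\varphi))^2}{2}f_{\text{critical}}(m,n)$ term. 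No feasible surrogate of $A$, and no nuclear-norm control of a clipped matrix, is ever needed.

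Three concrete problems with your plan as written. First, the feasibility gap you identify is genuine and unresolved: $L_M(\widehat{A})\ge L_M(\widetilde{A})$ requires $\widetilde{A}\in\mathcal{F}_{\tau,\gamma,\varphi}$, entrywise clipping can increase the nuclear norm, and there is no slack when $\tau=\theta$, so your decomposition cannot even start without a new idea; this is not ``bookkeeping'' but the step your argument hinges on. Second, even granting feasibility, your critical-entry remainder is $\sum_{\text{crit}}D\big(\text{Ber}(P_{u,i})\|\text{Ber}(\Sigma(\varphi))\big)$, which per entry is bounded only by roughly $\log\frac{1}{\Sigma(\varphi)}$ (take $P_{u,i}\to1$), not by anything of order $(1-\Sigma(\varphi))^2$; so your route would deliver a qualitatively similar but quantitatively weaker critical term, and the stated constant $\frac{(1-\Sigma(\varphi))^2}{2}$ would not follow. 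Third, your accounting for the factor $8$ is misplaced: the logistic proof's merging of the two log-terms into $\overline{M}_{u,i}\Gamma_{u,i}$ in \eqref{eq:P-est-helper5} does not use the symmetry $\sigma(-x)=1-\sigma(x)$ --- conditional on $M$, exactly one log-term is active per coordinate and contraction applies coordinatewise --- so the same merging works verbatim with $\Sigma$, the modified lemma retains the constant $4e\Upsilon_{\gamma,\varphi}\tau$ of \eqref{eq:main-inequality1-more-general}, and the $8e$ in the theorem comes from the $P^{\dagger}$ triangle inequality, not from splitting the symmetrized process into two pieces. That your constant coincidentally matches does not rescue the derivation, since your version would additionally carry the inflated critical term from the second point.
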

Ignoring terms involving $f_{\text{critical}}(m,n)$, the two bounds~\eqref{eq:main-inequality1-modified} and~\eqref{eq:main-inequality2-modified} are qualitatively similar to their counterparts in Theorem~\ref{thm:P-est-main-theorem} (our main result for \textsc{1bitMC}).
The $f_{\text{critical}}(m,n)$ terms are approximation errors in choosing algorithm parameter $\varphi$ poorly. If there exists some constant $p_{\text{critical}}\in(p_{\text{min}},1)$ such that $P_{u,i}\in[p_{\min},p_{\text{critical}}]\cup\{1\}$ for all $u\in[m],i\in[n]$, and $\varphi\in(-\gamma,\gamma)$ is chosen so that $\Sigma(\varphi)\ge p_{\text{critical}}$, then observe that $f_{\text{critical}}(m,n)=0$. In general, if $f_{\text{critical}}(m,n)$ is nonzero, then we can still have the two error bounds go to 0 provided as $m,n\rightarrow\infty$, we have $f_{\text{critical}}(m,n)\rightarrow0$.

\subsection*{Proof of Theorem~\ref{thm:P-est-main-theorem-modified}}

The theorem is a consequence of the following lemma, which we sketch a proof for at the end of this section.
\begin{lem}\label{lem:P-est-main-theorem-modified}
Under the same assumptions as in Theorem~\ref{thm:P-est-main-theorem-modified}, further assume that there exists $p_{\text{critical}}\in(p_{\text{min}},1)$ such that $P_{u,i}\in[p_{\min},p_{\text{critical}}]\cup\{1\}$ for all $u\in[m],i\in[n]$, and $\varphi\in(-\gamma,\gamma)$ is chosen so that $\Sigma(\varphi)\ge p_{\text{critical}}$. Then there exists a universal constant $C>0$ such that provided that $m+n\ge C$, with probability at least $1-\frac{C}{m+n}$ over randomness in which entries are revealed in $X$, we have
\begin{align}
\frac{1}{mn}\sum_{u=1}^{m}\sum_{i=1}^{n}(\widehat{P}_{u,i}-P_{u,i})^{2}
& \le 4e\Upsilon_{\gamma,\varphi}\tau\Big(\frac{1}{\sqrt{m}}+\frac{1}{\sqrt{n}}\Big). \label{eq:main-inequality1-modified-twice}
\end{align}
\end{lem}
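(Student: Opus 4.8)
The plan is to run the proof of Theorem~\ref{thm:P-est-helper} essentially verbatim, with three substitutions: the standard logistic link $\sigma$ is replaced by the modified link $\Sigma$ of equation~\eqref{eq:fancy-link}, the feasible set $\mathcal{F}_{\tau,\gamma}$ by $\mathcal{F}_{\tau,\gamma,\varphi}$ of equation~\eqref{eq:new-feasible-set}, and the contraction constant $L_\gamma$ by $\Upsilon_{\gamma,\varphi}$. The chain of reductions leading to $\sum_{u,i}(\widehat P_{u,i}-P_{u,i})^2\le\sup_{\Gamma\in\mathcal{F}_{\tau,\gamma,\varphi}}|\overline{L}_M(\Gamma)-\mathbb{E}_M[\overline{L}_M(\Gamma)]|$ carries over unchanged, since it uses nothing about the link beyond $P_{u,i}=\Sigma(A_{u,i})$ and $\widehat P_{u,i}=\Sigma(\widehat A_{u,i})$: optimality of $\widehat A$ gives $\overline{L}_M(\widehat A)\ge\overline{L}_M(A)$, the expected gap equals $-\sum_{u,i}D(\text{Ber}(P_{u,i})\|\text{Ber}(\widehat P_{u,i}))$, and Pinsker's inequality lower-bounds each KL term by $2(\widehat P_{u,i}-P_{u,i})^2$. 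All three steps remain valid in the boundary case $P_{u,i}=1$, where $D(\text{Ber}(1)\|\text{Ber}(\widehat P_{u,i}))=-\log\widehat P_{u,i}\ge 2(1-\widehat P_{u,i})^2$ under the usual convention $0\log 0=0$.

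The genuinely new step, which I would isolate first, is verifying that the truth $A$ lies in the asymmetric feasible set $\mathcal{F}_{\tau,\gamma,\varphi}$, since optimality of $\widehat A$ is only useful once $A$ is known to be a competitor. The nuclear-norm constraint holds by A1 with $\tau\ge\theta$, and the lower bound $A_{u,i}\ge-\gamma$ follows from $P_{u,i}\ge p_{\min}\ge\Sigma(-\gamma)$ together with strict monotonicity of $\Sigma$ on $(-\infty,\gamma]$. The interesting requirement is the one-sided upper bound ``$A_{u,i}\le\varphi$ whenever $M_{u,i}=0$''. Here I would use the key observation that any entry with $P_{u,i}=1$ is revealed almost surely, so on the event $\{M_{u,i}=0\}$ we must have $P_{u,i}<1$; under the lemma's hypothesis $P_{u,i}\in[p_{\min},p_{\text{critical}}]\cup\{1\}$ this forces $P_{u,i}\le p_{\text{critical}}\le\Sigma(\varphi)$, and monotonicity of $\Sigma$ then yields $A_{u,i}\le\varphi$. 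A union bound over the finitely many entries with $P_{u,i}=1$ shows $A\in\mathcal{F}_{\tau,\gamma,\varphi}$ with probability one, so the reduction above applies without consuming any of the probability budget.

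It then remains to bound $\sup_{\Gamma\in\mathcal{F}_{\tau,\gamma,\varphi}}|\overline{L}_M(\Gamma)-\mathbb{E}_M[\overline{L}_M(\Gamma)]|$ by reusing the machinery of Lemma~\ref{lem:P-est-main-lemma}: Markov's inequality on the $h$-th moment with $h=\log(m+n)$, symmetrization introducing an independent copy $M'$ and Rademacher signs $\xi_{u,i}$, contraction via the second inequality of Theorem~11.6 of \citet{boucheron2013concentration}, the trace-duality bound $|\langle\Xi\circ\overline M,\Gamma\rangle|\le\|\Xi\circ\overline M\|_2\|\Gamma\|_*\le\tau\sqrt{mn}\,\|\Xi\circ\overline M\|_2$, and finally the spectral-norm moment bound of \citet{seginer2000expected}. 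The contraction step is the one I expect to be the main obstacle, because $\mathcal{F}_{\tau,\gamma,\varphi}$ constrains each coordinate $\Gamma_{u,i}$ to a domain that depends on the random $M_{u,i}$, so the contraction must be performed coordinate-wise with coordinate-dependent Lipschitz constants. For an entry with $M_{u,i}=1$ only $t\mapsto\log\frac{\Sigma(t)}{\Sigma(0)}$ appears, over $t\in[-\gamma,\infty)$; using $\Sigma(t)\ge\sigma(t)$ and $\sigma(t)\ge 1-\sigma(\gamma)$ on $[-\gamma,\gamma]$ (and flatness for $t>\gamma$) its Lipschitz constant is at most $1+\tfrac{1}{2\gamma}$. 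For an entry with $M_{u,i}=0$ only $t\mapsto\log\frac{1-\Sigma(t)}{1-\Sigma(0)}$ appears, over $t\in[-\gamma,\varphi]$ where $1-\Sigma(t)\ge 1-\Sigma(\varphi)>0$, giving Lipschitz constant at most $\frac{1}{1-\Sigma(\varphi)}\big(\tfrac14+\tfrac{1}{2\gamma}\big)$. The uniform bound over both cases is exactly $\Upsilon_{\gamma,\varphi}$, so the contraction replaces each coordinate term by $\overline M_{u,i}\Gamma_{u,i}$ at the cost of a factor $(2\Upsilon_{\gamma,\varphi})^h$ in place of $(2L_\gamma)^h$. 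Propagating this through yields the moment bound $C(4\Upsilon_{\gamma,\varphi}\tau\sqrt{mn})^h(m^{h/2}+n^{h/2})$, and choosing $z=4e\Upsilon_{\gamma,\varphi}\tau\sqrt{mn}(\sqrt m+\sqrt n)$ in Markov's inequality gives failure probability $C/(m+n)$; dividing the supremum bound by $mn$ produces inequality~\eqref{eq:main-inequality1-modified-twice}.
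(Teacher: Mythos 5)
Your proposal is correct and follows essentially the same route as the paper's proof sketch: rerun the argument of Theorem~\ref{thm:P-est-helper} with $\sigma$, $\mathcal{F}_{\tau,\gamma}$, and $L_\gamma$ replaced by $\Sigma$, $\mathcal{F}_{\tau,\gamma,\varphi}$, and $\Upsilon_{\gamma,\varphi}$, with the only substantive change being the two-case coordinate-wise contraction bound (Lipschitz constants $1+\frac{1}{2\gamma}$ when $M_{u,i}=1$ and $\frac{1}{1-\Sigma(\varphi)}(\frac14+\frac{1}{2\gamma})$ when $M_{u,i}=0$), exactly as in the paper. Your explicit verification that $A\in\mathcal{F}_{\tau,\gamma,\varphi}$ almost surely (entries with $P_{u,i}=1$ are revealed with probability one, and otherwise $P_{u,i}\le p_{\text{critical}}\le\Sigma(\varphi)$ forces $A_{u,i}\le\varphi$) is a correct and welcome detail that the paper's sketch leaves implicit but that its hypotheses on $p_{\text{critical}}$ and $\varphi$ are clearly designed for.
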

In general, $P$ might not satisfy the additional assumption in Lemma~\ref{lem:P-est-main-theorem-modified}. What we do is consider the projection of $P$ onto matrices that do satisfy the additional assumption. Namely, let the projection $P^{\dagger}\in[0,1]^{m\times n}$ be defined as
\[
P_{u,i}^{\dagger}=\begin{cases}
P_{u,i} & \text{if }P_{u,i}\in[p_{\min},\Sigma(\varphi)]\cup1,\\
\Sigma(\varphi) & \text{if }P_{u,i}\in\big(\Sigma(\varphi),\frac{\Sigma(\varphi)+1}{2}\big],\\
1 & \text{if }P_{u,i}\in\big(\frac{\Sigma(\varphi)+1}{2},1\big].
\end{cases}
\]
Matrix $P^{\dagger}$ is guaranteed to satisfy the conditions on the propensity score matrix in Lemma~\ref{lem:P-est-main-theorem-modified} with $p_{\text{critical}}=\Sigma(\varphi)$.

Next, we have
\begin{align}
\frac1{mn} \tripleBar\widehat{P}-P\tripleBar_{F}^{2}
& \le \frac1{mn}(\tripleBar\widehat{P}-P^{\dagger}\tripleBar_{F}+\tripleBar P^{\dagger}-P\tripleBar_{F})^{2} \nonumber \\
& \le
    \frac2{mn}\tripleBar\widehat{P}-P^{\dagger}\tripleBar_{F}^{2}
    +\frac2{mn}\tripleBar P^{\dagger}-P\tripleBar_{F}^{2}.
\label{eq:P-est-main-theorem-modified-helper1}
\end{align}
We can bound the first RHS term using Lemma~\ref{lem:P-est-main-theorem-modified}:
\begin{equation}
\frac2{mn}\tripleBar\widehat{P}-P^{\dagger}\tripleBar_{F}^{2}
\le 8 e\Upsilon_{\gamma,\varphi}\tau\Big(\frac{1}{\sqrt{m}}+\frac{1}{\sqrt{n}}\Big).
\label{eq:P-est-main-theorem-modified-helper2}
\end{equation}
The second RHS term in inequality~\eqref{eq:P-est-main-theorem-modified-helper1} can be upper-bounded by noticing that the worst-case
absolute entry-wise error of $\frac{1-\Sigma(\varphi)}{2}$ occurs
only for $u\in[m],i\in[n]$ such that $P_{u,i}\in(\Sigma(\varphi),1)$.
Thus,
\begin{align}
 \frac2{mn}\tripleBar P^{\dagger}-P\tripleBar_{F}^{2} 
 & \le\frac2{mn}\Big(\frac{1-\Sigma(\varphi)}{2}\Big)^{2}\sum_{u=1}^{m}\sum_{i=1}^{n}\mathbf{1}\{P_{u,i}\in(\Sigma(\varphi),1)\} \nonumber \\
 & =\frac{(1-\Sigma(\varphi))^{2}}{2}f_{\text{critical}}(m,n).
 \label{eq:P-est-main-theorem-modified-helper3}
\end{align}
Combining inequalities~\eqref{eq:P-est-main-theorem-modified-helper1}, \eqref{eq:P-est-main-theorem-modified-helper2}, and~\eqref{eq:P-est-main-theorem-modified-helper3} yields the theorem's first main bound~\eqref{eq:main-inequality1-modified}. The theorem's second main bound~\eqref{eq:main-inequality2-modified} can then be established using the same proof ideas as in establishing bound~\eqref{eq:main-inequality2-more-general} for Theorem~\ref{thm:P-est-helper}. 

\subsection*{Proof Sketch for Lemma~\ref{lem:P-est-main-theorem-modified}}

\begingroup
\allowdisplaybreaks
We highlight the main change to the proof of bound~\eqref{eq:main-inequality1-more-general} in Theorem~\ref{thm:P-est-helper}. Specifically, we do not assume the condition given by equation~\eqref{eq:weird-L} that defines the variable $L_{\gamma}$ (in fact, as we explain shortly, we replace $L_{\gamma}$ with $\Upsilon_{\gamma,\varphi}$). This affects the contraction argument made in inequality~\eqref{eq:P-est-helper5}.
At the start of inequality~\eqref{eq:P-est-helper5} (for which we replace $\sigma$ with $\Sigma$),
each term of the summation has a factor
\begin{equation}
M_{u,i}\log\frac{\Sigma(\Gamma_{u,i})}{\Sigma(0)}+(1-M_{u,i})\log\frac{1-\Sigma(\Gamma_{u,i})}{1-\Sigma(0)}.
\label{eq:magic-factor}
\end{equation}
Exactly one of the two terms can be nonzero since $M_{u,i}\in\{0,1\}$. Previously, we showed that $x\mapsto\log\frac{\sigma(x)}{\sigma(0)}$ and $x\mapsto\log\frac{1-\sigma(x)}{1-\sigma(0)}$ were each $L_{\gamma}$-Lipschitz for $x\in[-\gamma,\gamma]$ (i.e., $x\mapsto\frac1{L_{\gamma}}\log\frac{\sigma(x)}{\sigma(0)}$ and $x\mapsto\frac1{L_{\gamma}}\log\frac{1-\sigma(x)}{1-\sigma(0)}$ are contractions). Now we show the analogous result using $\Sigma$ instead of $\sigma$ and with the new feasible set $\mathcal{F}_{\tau,\gamma,\varphi}$. There are two cases to consider:

\textbf{Case 1 ($M_{u,i}=1$).} The only possibly nonzero term in expression \eqref{eq:magic-factor} is $\log\frac{\Sigma(\Gamma_{u,i})}{\Sigma(0)}$, where $\Gamma_{u,i}\in[-\gamma,\gamma]$. We
show that the function $x\mapsto\frac{1}{\Upsilon_{\gamma,\varphi}}\log\frac{\Sigma(x)}{\Sigma(0)}$
(for $x\in[-\gamma,\gamma]$) is a contraction by showing that $|\frac{d}{dx}\log\frac{\Sigma(x)}{\Sigma(0)}|\le\Upsilon_{\gamma,\varphi}$. Recall that the standard
logistic function $\sigma$ has $\frac{|\sigma'(x)|}{\sigma(x)(1-x)}=1$. Also, by construction, $\Sigma(x)\ge\sigma(x)$.
We have, for $x\in[-\gamma,\gamma]$,
\begin{align*}
 \Big|\frac{d}{dx}\log\frac{\Sigma(x)}{\Sigma(0)}\Big|
 & = \Big|\frac{d}{dx}\log\Sigma(x)\Big|\\
 & =\frac{1}{\Sigma(x)}\cdot\Big[\sigma'(x)+\frac{1-\sigma(\gamma)}{2\gamma}\Big]\\
 & \le\frac{1}{\sigma(x)}\cdot\Big[\sigma'(x)+\frac{1-\sigma(\gamma)}{2\gamma}\Big]\\
 & \le\frac{\sigma'(x)}{\sigma(x)(1-\sigma(x))}+\frac{1}{\sigma(x)}\cdot\frac{1-\sigma(\gamma)}{2\gamma}\\
 & =1+\frac{1}{\sigma(x)}\cdot\frac{1-\sigma(\gamma)}{2\gamma}\\
 & \le1+\frac{1}{\sigma(-\gamma)}\cdot\frac{1-\sigma(\gamma)}{2\gamma}\\
 & =1+\frac{1}{2\gamma}\\
 & \le \Upsilon_{\gamma,\varphi}.
\end{align*}

\textbf{Case 2 $(M_{u,i}=0$).} The only possibly nonzero term in expression \eqref{eq:magic-factor} is $\log\frac{1-\Sigma(\Gamma_{u,i})}{1-\Sigma(0)}$, where $\Gamma_{u,i}\in[-\gamma,\varphi]$.
We show that the function $x\mapsto\frac{1}{\Upsilon_{\gamma,\varphi}}\log\frac{1-\Sigma(x)}{1-\Sigma(0)}$
(for $x\in[-\gamma,\varphi]$) is a contraction by showing that $|\frac{d}{dx}\log\frac{1-\Sigma(x)}{1-\Sigma(0)}|\le\Upsilon_{\gamma,\varphi}$. Note that $\sigma'(x)\le1/4$ for all $x\in\mathbb{R}$. We have, for $x\in[-\gamma,\varphi]$,
\begin{align*}
 \Big|\frac{d}{dx}\log\frac{1-\Sigma(x)}{1-\Sigma(0)}\Big|
 & =\Big|\frac{d}{dx}\log(1-\Sigma(x))\Big|\\
 & =\Big|\frac{1}{1-\Sigma(x)}\cdot\frac{d}{dx}(1-\Sigma(x))\Big|\\
 & =\frac{1}{1-\Sigma(x)}\cdot\Big[\sigma'(x)+\frac{1-\sigma(\gamma)}{2\gamma}\Big]\\
 & \le\frac{1}{1-\Sigma(\varphi)}\cdot\Big[\sigma'(x)+\frac{1-\sigma(\gamma)}{2\gamma}\Big]\\
 & \le\frac{1}{1-\Sigma(\varphi)}\cdot\Big[\frac{1}{4}+\frac{1}{2\gamma}\Big]\\
 & \le \Upsilon_{\gamma,\varphi}. \qedhere
\end{align*}
\endgroup

\section{More Details on Experiments}
\label{sec:more-experiments}

In this section, we explain why Assumptions A1--A3 hold for the two synthetic datasets (with high probability in the case of \texttt{UserItemData}), and we also present MAE-based results for the numerical experiments on both synthetic and real-world datasets.

\subsection{Sythetic Data}

We verify that Assumptions A1-A3 hold for the synthetic datasets. Assumption A3 holds as both synthetic datasets have partially observed matrix $X$ consist of ratings in a bounded interval. For \texttt{MovieLoverData}, the propensity score matrix $P$ is a block matrix, so it is low-rank, and moreover it has three unique values that are all nonzero and less than 1; thus Assumptions A1 and A2 are both met.
For \texttt{UserItemData}, the propensity score is $P_{u,i} = \sigma(A_{u,i})$ where $A_{u,i}=U_2[u]w_1 + V_2[i]w_2$ where $\sigma$ is the standard logistic function. Hence, parameter matrix $A$ (in Assumptions A1 and A2) has low rank, and in practice after we generate $A$ we can find what its maximum absolute value entry is to satisfy Assumption A2. Alternatively, to obtain a bound that holds with high probability, standard concentration inequality results for the maxima of a finite collection of sub-Gaussian random variables can be used to bound $\|A\|_{\max}$.
In summary, the synthetic datasets we consider satisfy the assumptions of our theoretical analysis.

The MAE-based measures for different algorithms on \texttt{MovieLoverData} and \texttt{UserItemData} are presented in Table~\ref{tab:mcmae}.

\begin{table*}[h]
  \centering
  \begin{tabular}{lllll}
    \toprule
    \multirow{2}{*}{Algorithm} & \multicolumn{2}{c}{\texttt{MovieLoverData}}& \multicolumn{2}{c}{\texttt{UserItemData}}  \\
    \cmidrule(r){2-5}
    ~    &  MAE &  SNIPS-MAE & MAE & SNIPS-MAE  \\
    \midrule
\textsc{PMF} & {0.421  $\pm$  0.013} & {0.421  $\pm$  0.012} & 0.324  $\pm$  0.001 & 0.323  $\pm$  0.001 \\
\textsc{NB-PMF} & 0.490  $\pm$  0.009 & 0.490  $\pm$  0.008 & {\bf 0.308  $\pm$  0.002} & {\bf 0.308  $\pm$  0.002} \\
\textsc{LR-PMF} & N/A & N/A& 0.321  $\pm$  0.002 & 0.325  $\pm$  0.002 \\
\textsc{1bitMC-PMF} & 0.464  $\pm$  0.010 & 0.464  $\pm$  0.010 & {\bf 0.308  $\pm$  0.002} & 0.309  $\pm$  0.002 \\
\arrayrulecolor{black!30}\midrule
\textsc{SVD} & 0.871  $\pm$  0.008 & 0.871  $\pm$  0.008 & 0.310  $\pm$  0.001 & 0.310  $\pm$  0.001 \\
\textsc{NB-SVD}  & 0.722  $\pm$  0.010 & 0.722  $\pm$  0.010 & 0.313  $\pm$  0.001 & 0.314  $\pm$  0.002 \\
\textsc{LR-SVD} & N/A & N/A& 0.312  $\pm$  0.001 & 0.316  $\pm$  0.002 \\
\textsc{1bitMC-SVD} & 0.727  $\pm$  0.009 & 0.727  $\pm$  0.009 & 0.310  $\pm$  0.001 & 0.310  $\pm$  0.001 \\
\arrayrulecolor{black!30}\midrule
\textsc{SVD++} & 0.457  $\pm$  0.011 & 0.457  $\pm$  0.010 & 0.311  $\pm$  0.001 & 0.311  $\pm$  0.001 \\
\textsc{NB-SVD++} & 0.784  $\pm$  0.009 & 0.793  $\pm$  0.009 & 0.318  $\pm$  0.002 & 0.318  $\pm$  0.002 \\
\textsc{LR-SVD++} & N/A & N/A& 0.319  $\pm$  0.001 & 0.323  $\pm$  0.001 \\
\textsc{1bitMC-SVD++} & 0.459  $\pm$  0.011 & 0.459  $\pm$  0.010 & 0.310  $\pm$  0.001 & 0.310  $\pm$  0.001 \\
\arrayrulecolor{black!30}\midrule
\textsc{SoftImpute} & 0.455  $\pm$  0.007 & 0.455  $\pm$  0.007 & 0.550  $\pm$  0.002 & 0.538  $\pm$  0.002 \\
\textsc{NB-SoftImpute} & 0.528  $\pm$  0.006 & 0.527  $\pm$  0.006  & 0.569  $\pm$  0.002 & 0.567  $\pm$  0.003 \\
\textsc{LR-SoftImpute} & N/A & N/A& 0.571  $\pm$  0.002 & 0.562  $\pm$  0.002 \\
\textsc{1bitMC-SoftImpute} & 0.493  $\pm$  0.007 & 0.493  $\pm$  0.006 & 0.557  $\pm$  0.002 & 0.545  $\pm$  0.002 \\
\arrayrulecolor{black!30}\midrule
\textsc{MaxNorm} & 0.571  $\pm$  0.024 & 0.571  $\pm$  0.024 & 0.508  $\pm$  0.002 & 0.496  $\pm$  0.002 \\
\textsc{NB-MaxNorm} & \textbf{0.415  $\pm$  0.043} & \textbf{0.415  $\pm$  0.042} & 0.517  $\pm$  0.006 & 0.507  $\pm$  0.007 \\
\textsc{LR-MaxNorm} & N/A & N/A& 0.520  $\pm$  0.004 & 0.508  $\pm$  0.005 \\
\textsc{1bitMC-MaxNorm} & 0.465  $\pm$  0.042 & 0.465  $\pm$  0.042 & 0.518  $\pm$  0.003 & 0.507  $\pm$  0.003 \\
\arrayrulecolor{black!30}\midrule
\textsc{WTN} & 1.350  $\pm$  0.005 & 1.349  $\pm$  0.005  & 0.527  $\pm$  0.002 & 0.516  $\pm$  0.002 \\
\textsc{NB-WTN} & 1.306  $\pm$  0.019 & 1.305  $\pm$  0.018 & 0.532  $\pm$  0.002 & 0.522  $\pm$  0.002 \\
\textsc{LR-WTN} & N/A & N/A& 0.529  $\pm$  0.002 & 0.519  $\pm$  0.002 \\
\textsc{1bitMC-WTN} & 1.350  $\pm$  0.005 & 1.349  $\pm$  0.005 & 0.527  $\pm$  0.002 & 0.516  $\pm$  0.002 \\
\arrayrulecolor{black!30}\midrule
\textsc{ExpoMF} & 0.547  $\pm$  0.003 & 0.548  $\pm$  0.004  & 0.864  $\pm$  0.005 & 0.889  $\pm$  0.005 \\
\arrayrulecolor{black}\bottomrule
  \end{tabular}
  \smallskip
  \caption{MAE-based metrics of matrix completion methods on synthetic datasets (average $\pm$ standard deviation across 10 experimental repeats).}
  \label{tab:mcmae}
\end{table*}

\subsection{Real-World Data}
The MAE-based measures for different algorithms on \texttt{Coat} and \texttt{MovieLens-100k} are presented in Table~\ref{tab:maereal}.
\begin{table}[h]
  \centering
  \begin{tabular}{lllll}
    \toprule
    \multirow{2}{*}{Algorithm} & \multicolumn{2}{c}{\texttt{Coat}}&\multicolumn{2}{c}{\texttt{MovieLens-100k}}  \\
    \cmidrule(r){2-5}
    ~    & MAE & SNIPS-MAE & MAE & SNIPS-MAE\\
    \midrule
     
\textsc{PMF} & 0.760 &{\bf 0.783}  & 0.736  $\pm$  0.005 & 0.741  $\pm$  0.005 \\
\textsc{NB-PMF} & {0.740} & 0.797 & N/A & N/A\\
\textsc{LR-PMF} & 0.743 & 0.798 & N/A & N/A\\
\textsc{1bitMC-PMF} & 0.759 & {\bf 0.783} & 0.729  $\pm$  0.005 & 0.733  $\pm$  0.005 \\
\arrayrulecolor{black!30}\midrule
\textsc{SVD} & 0.903 & 0.936 & 0.738  $\pm$  0.006 & 0.742  $\pm$  0.006 \\
\textsc{NB-SVD} & 0.879 & 0.947 & N/A & N/A\\
\textsc{LR-SVD} & 0.881 & 0.945 & N/A & N/A\\
\textsc{1bitMC-SVD} & 0.901 & 0.936 & {\bf 0.716  $\pm$  0.005} & {\bf 0.720  $\pm$  0.005} \\
\arrayrulecolor{black!30}\midrule
\textsc{SVD++} & 0.896 & 0.913 & 0.717  $\pm$  0.006 & 0.721  $\pm$  0.006 \\
\textsc{NB-SVD++} &  0.927 & 0.999 & N/A & N/A\\
\textsc{LR-SVD++} & 0.916 & 0.984 & N/A & N/A\\
\textsc{1bitMC-SVD++} & 0.895 & 0.915 & 0.718  $\pm$  0.004 & 0.722  $\pm$  0.004 \\
\arrayrulecolor{black!30}\midrule
\textsc{SoftImpute} & 0.759 & 0.821 & 0.756  $\pm$  0.006 & 0.765  $\pm$  0.006 \\
\textsc{NB-SoftImpute} & 0.751 & 0.811 & N/A & N/A\\
\textsc{LR-SoftImpute} & 0.760 & 0.821 & N/A & N/A\\
\textsc{1bitMC-SoftImpute} & \textbf{0.733} & 0.792 & 0.756  $\pm$  0.005 & 0.764  $\pm$  0.006 \\
\arrayrulecolor{black!30}\midrule
\textsc{MaxNorm} & 0.819 & 0.886 & 0.749  $\pm$  0.005 & 0.754  $\pm$  0.005 \\
\textsc{NB-MaxNorm} & 0.780 & 0.843 & N/A & N/A\\
\textsc{LR-MaxNorm} & 0.829 & 0.896 & N/A & N/A\\
\textsc{1bitMC-MaxNorm} & 0.801 & 0.865 & 0.757  $\pm$  0.007 & 0.764  $\pm$  0.007 \\
\arrayrulecolor{black!30}\midrule
\textsc{WTN} & 0.894 & 0.967 & 0.765  $\pm$  0.005 & 0.770 $\pm$  0.005 \\
\textsc{NB-WTN} & 0.900 & 0.972 & N/A & N/A\\
\textsc{LR-WTN} & 0.891 & 0.963 & N/A & N/A\\
\textsc{1bitMC-WTN} & 0.894 & 0.967 & 0.763  $\pm$  0.005 & 0.768  $\pm$  0.005 \\
\arrayrulecolor{black!30}\midrule
\textsc{ExpoMF} & 1.071 & 1.158 & 1.195  $\pm$  0.023 & 1.223  $\pm$  0.023 \\
\arrayrulecolor{black}\bottomrule
  \end{tabular}
  \vspace{0.5em}
  \caption{MAE-based metrics of matrix completion methods on \texttt{Coat} and \texttt{MovieLens-100k} (results for \texttt{MovieLens-100k} are the averages $\pm$ standard deviations across 10 experimental repeats).}
  \label{tab:maereal}
\end{table}

\end{document}